\def\eqref#1{equation~\ref{#1}}
\def\1{\bm{1}}
\DeclareMathAlphabet{\mathsfit}{\encodingdefault}{\sfdefault}{m}{sl}
\SetMathAlphabet{\mathsfit}{bold}{\encodingdefault}{\sfdefault}{bx}{n}
\newcommand{\Var}{\mathrm{Var}}
\newtheorem{theorem}{Theorem}
\newtheorem{example}{Example}
\newtheorem{lemma}[theorem]{Lemma}
\newtheorem{definition}{Definition}
\title{On Space Folds of ReLU Neural Networks}
\author{\name Michal Lewandowski \email michal.lewandowski@scch.at \\
      \addr Software Competence Center Hagenberg (SCCH)
      \AND
      \name Hamid Eghbalzadeh \email heghbalz@meta.com \\
      \addr AI at Meta
      \AND
      \name Bernhard Heinzl \email bernhard.heinzl@scch.at\\
      \addr Software Competence Center Hagenberg (SCCH)
      \AND
      \name Raphael Pisoni \email raphael.pisoni@scch.at\\
      \addr Software Competence Center Hagenberg (SCCH)
      \AND
      \name Bernhard A.Moser \email bernhard.moser@scch.at\\
      \addr Software Competence Center Hagenberg (SCCH)\\
      Johannes Kepler University of Linz (JKU)}
\begin{document}

\maketitle

\begin{abstract}
Recent findings suggest that the consecutive layers of ReLU neural networks can be understood geometrically as space folding transformations of the input space, revealing patterns of self-similarity. In this paper, we present the first quantitative analysis of this space folding phenomenon in ReLU neural networks. Our approach focuses on examining how straight paths in the Euclidean input space are mapped to their counterparts in the Hamming activation space. In this process, the convexity of straight lines is generally lost, giving rise to non-convex folding behavior. To quantify this effect, we introduce a novel measure based on range metrics, similar to those used in the study of random walks, and provide the proof for the equivalence of convexity notions between the input and activation spaces. Furthermore, we provide empirical analysis on a geometrical analysis benchmark (CantorNet) as well as an image classification benchmark (MNIST). Our work advances the understanding of the activation space in ReLU neural networks by leveraging the phenomena of geometric folding, providing valuable insights on how these models process input information.
\end{abstract}


\section{Introduction}
Neural networks are inspired by the biological structure of the brain~\citep{Rosenblatt1958perceptron}. They achieve outstanding performance across various domains, including computer vision~\citep{Krizhevsky2012ImageNetCompetition} and speech recognition~\citep{Maas13rectifiernonlinearities}. However, despite these impressive results, their underlying mechanisms remain poorly understood from a mathematical perspective, and current advances lack a solid foundation in rigorous mathematical analysis~\citep{zhang2017understanding,neyshabur2017exploring,Marcus2018,sejnowski2020unreasonable}.

As we will discuss in this work, geometric folding can provide valuable insights and serve as a useful tool to expand our understanding of neural networks. The phenomena of geometric folding can be described as the process by which a structure undergoes a transformation from a linear or planar form into a more compact, layered configuration, where space is efficiently organized through recursive bending or folding patterns.  For example, in biological systems, DNA molecules fold into complex yet highly organized shapes to fit within the confines of a cell nucleus~\citep{dekker2013exploring}. Further, also proteins fold into precise, three-dimensional shapes, transforming from linear amino acid chains into complex structures essential for their specific functions~\citep{crescenzi1998complexity,dill2008protein,jumper2021alphafold}. Folding has been argued to appear in neural networks, as the layering of data representations across the network depth allows for increasingly abstract, compact, and hierarchical information encoding, capturing patterns at multiple scales. It was proposed more than a decade ago, that successive layers of ReLU neural networks can be interpreted as folding operators~\citep{montufar2014number,raghu2017expressive}. 
These folds result in the replication of shapes formed by the network and contribute to understanding how the space is folded, which can help reveal symmetries in the decision boundaries that the network learns.
\citet{keup2022origami} likened this process to the physical process of paper folding, where the input space is ``folded'' during learning. 
However, folding occurred in neural networks is elusive in the continuous input space. In case of protein folding, this process has been quantified using discrete mathematics on sequences of amino acids~\citep{crescenzi1998complexity}. For neural ReLU networks, the activation space offers a possibility for further analyses of this phenomenon. 
In this paper, we focus on the activation space to investigate symmetries and self-similarity in the learned regions~\citep{balestriero2024geometry}. 
The activation space and its related linear regions have also been used also as a measure of the network's expressivity (e.g.,~\cite{montufar2014number,raghu2017expressive,hanin2019complexity}).

Insofar, the concept of space folding by neural networks remains largely qualitative, with no prior work attempting to quantify these effects. 
In this paper, we introduce the first method for measuring these transformations using range measures~\citep{weyl1916disrepancy,moser12weylzonotope} and discrete mathematics.
Our analysis is based on a topological investigation of the activation patterns along a straight path in the activation space. In the Euclidean space the shortest path between two points is a straight line. Walking along such a path without turns monotonically increases the distance to the starting point. However, this observation no longer applies in the activation space. During the folding operation, the convexity of the created linear regions (defined in Sec.~\ref{sec:preliminaries}), may not be preserved and the Hamming distance on a straight path between two (non-adjacent) patterns  can decrease. This lack of preservation inspires the introduction of our space folding measure, which measures the deviations from convexity on a straight path in both the input (Euclidean) and the activation (Hamming) spaces. In summary, our contributions in this work are as follows:
\begin{itemize}
    \item We prove the equivalence of convexity notions between the  input  and   activation spaces. 
    \item We introduce a \textit{space folding} measure to quantify local deviations from convexity in the activation space of ReLU networks. We provide both local and global versions of our measure. We further provide the exact algorithm for its computation, together with its complexity analysis, and suggest the ways of reducing this complexity through intra-class clustering of samples.
    \item  We experimentally investigate the behaviour of our measure on (\textit{i}) CantorNet, a specially constructed synthetic example with an arbitrarily ragged decision surface, and (\textit{ii})  ReLU networks with varying depth and width with constant number of hidden neurons trained on the MNIST benchmark. We then extend the analysis to  larger networks and find that, although the folding values do not change much, the ratio of inter-digit paths that exhibit the folding effects increases significantly.  
\end{itemize}

The remainder of the paper is organized as follows: Sec.~\ref{sec:related_work} details the related work; Sec.~\ref{sec:preliminaries}  recalls some basic facts and fixes notation for the rest of the paper; Sec.~\ref{sec:convexity} establishes convexity results, Sec.~\ref{sec:spaceFolding} introduces the space folding measure; Sec.~\ref{sec:experiments} describes the experimental results   and discusses how the measure extends to different types of layers; Sec.~\ref{sec:future_work} outlines the future work; Sec.~\ref{s:Conclusions} provides concluding remarks, the take-away message of our work and possible future directions for our work. In Appendices, Appendix~\ref{app:cantornet} briefly introduces CantorNet (following~\cite{lewandowski2024cantornet}); Appendix~\ref{sec:heatmaps} details results obtained on the MNIST  dataset; Appendix~\ref{sec:grouping_sensitivity} provides a preliminary study of the sensitivity of measure to intra-class clustering of samples; Appendix~\ref{sec:deeper_networks} presents the results obtained for larger networks.  

\section{Related work}
\label{sec:related_work}

\paragraph{Activation Space.}
The pioneering study by~\citeauthor{makhoul89firstlinearregions} has investigated  partitioning the input space with neural networks with 2-hidden-layers,  thresholding neurons with the ReLU activation function (without naming it). 
With two hidden layers, the first layer creates hyperplanes that divide the input space into regions, adjacent if they have a Hamming distance of 1~\citep{makhoul1991partitioning_capabilities}. Connected regions are defined by a path through adjacent regions.  The interest in the number of these regions was revived in 2014 by~\citeauthor{montufar2014number}, with several follow-up works, e.g.,~\citep{raghu2017expressive,serra2018bounding,xiong2020numberLR_CNN,hanin2019complexity,HaninR19reluhavefewactivation}. The authors provided ever tighter  bounds on the number of activation regions, and  used them as a proxy for its expressiveness, among others.

\paragraph{Space Folds.} The idea of  folding the space has been investigated in the computational geometry~\citep{demaine1998folding_and_cutting}.
\citeauthor{demaine2005survey} surveyed the phenomenon, focusing on the type of object being folded, e.g., paper, or polyhedra.
\citeauthor{bern1996complexity} explored whether a given crease pattern can be folded into a flat origami (non-crossing polygons in 2D with layers). Later, \citeauthor{marshall2018origami} showed that any compact, orientable, piecewise-linear 2-manifold with a Euclidean metric can achieve this structure.
In~\cite{montufar2014number} in Section 2.4, the authors briefly mentioned the folding phenomena, although through the lens of  linear regions. They argue that each hidden layer in a neural network acts as a folding operator, recursively collapsing input-space regions. This folding depends on the network's weights, biases, and activation functions, resulting in input regions that vary in size and orientation, highlighting the network's flexible partitioning. 
In~\cite{phuong2020functional}, in the Appendix A.2 the authors explored the folding operation by ReLU neural networks, but leave the exploration quite early on.  In~\cite{keup2022origami}, the authors argued that it is through the folding operation that the neural networks arrive at their approximation power. 

\paragraph{Self-Similarity and Symmetry.} 
Self-similarity and symmetry are related but distinct concepts, often found in nature, mathematics, and physics. Self-similarity means that a structure or pattern looks similar to itself at different scales, and is also present in numerical data, e.g., images~\citep{Wang2020}, audio tracks~\citep{Foote1999} or videos~\citep{AlemanFlores2004}. Symmetry implies that an object or pattern is invariant under certain transformations, e.g., reflection, rotation, or translation. In the context of neural networks, in~\cite{grigsby23hiddensymmetries}, the authors describe a number of mechanisms through which hidden symmetries can arise. Their experiments indicate that the probability that a network has no hidden symmetries decreases towards 0 as depth increases, while increasing towards 1 as width and input dimension increase. Many fractal shapes, such as the Mandelbrot set~\citep{mandelbrot1983fractal} or CantorNet~\citep{lewandowski2024cantornet}, exhibit both self-similarity and certain symmetries.
Moreover, both concepts relate to the folding operation: invariance under reflection (symmetry) can be equivalently understood as a space fold, and self-similarity can be interpreted as recursive folding or scaling operations that replicate the pattern across different levels. In neural network architectures, these principles can manifest through hierarchical structures, where each layer effectively ``folds'' information from previous layers, producing patterns that may repeat or reflect across layers or nodes~\citep{raghu2017expressive}.

\paragraph{Distance Alteration.}
Lipschitz constant, a well established concept in mathematical analysis,  bounds  how much function's output can change in proportion to a change in its input. 
In context of neural networks, it has been linked to adversarial robustness, e.g.,~\citep{tsuzuku2018lipschitz,virmaux2018lipschitz}, or generalization properties, e.g.,~\citep{bonicelli2022effectiveness}. \citeauthor{cisse2017parseval} showed that the Lipschitz constant of a neural network can grow exponentially with its depth.
\citeauthor{cem2019sortinglipschitzapproximation} observe  that enforcing the Lipschitz property leads to some limitations, and show that   norm-constrained
ReLU networks are less expressive than unconstrained ones.  The exact computation of the Lipschitz constant, even for shallow neural networks (two layers), is NP-hard~\citep{virmaux2018lipschitz}. In~\cite{gamba2023lipschitz}, the authors experimentally studied  the (empirical) Lipschitz constant of deep networks undergoing double descent, and highlighted non-monotonic trends strongly correlating with the test error.
Finally, \citeauthor{hanin2021deep} prove that the expected length distortion slightly shrinks  for ReLU networks with standard random
initialization, building on the results of~\citeauthor{Ilan2021Trajectorygrowth}.
While important, none of the aforementioned works touch on the activation space of neural networks, nor do they investigate the monotonicity of a mapped straight line. Our analysis extends beyond the concept of the Lipschitz constant by exploring the convolution of the input space under a neural network.

\section{Preliminaries}\label{sec:preliminaries}
We define a \emph{ReLU neural network} $\mathcal{N}:\mathcal{X}\rightarrow \mathcal{Y}$ with the total number of $N$ neurons as an alternating composition of   the ReLU function   $\sigma(x) := \max(x, 0)$   applied element-wise on the input $x$, and  affine functions with weights $W_k$ and biases $b_k$ at layer $k$. 
An input $x\in\mathcal{X}$ propagated through $\mathcal{N}$  generates non-negative activation values on each neuron.
A \textit{binarization} is  a mapping $\pi:\mathbb{R}^N \to \{0,1\}^N$ applied to a vector  $v=(v_1,\ldots,v_N)\in\mathbb{R}^N$, resulting in a binary vector by clipping strictly positive entries of $v$  to 1, and non-positive entries to 0, that is $\pi(v_i)=1$ if $v_i>0$, and $\pi(v_i)=0$ otherwise. In our case, the vector $v$ is the concatenation of all neurons of all hidden layers, called an \emph{activation pattern},  and it represents an element in a binary hypercube $\mathcal{H}_N:=\{0,1\}^N$ where the dimensionality is equal to the number $N$ of hidden neurons in network $\mathcal{N}$. A 
\emph{linear region} is an element of a partition covering the input domain where the network behaves as an affine function~\citep{montufar2014number} (see Fig.~\ref{fig:straight_path_walks}, left).  The Hamming distance, $d_H(u,v):=\left|\{u_{i}\neq v_{i}\text{ for } i=1,\ldots,N\}\right|$, measures the difference between $u,v\in\mathcal{H}_N$.

\section{Convexity}
\label{sec:convexity}
Convexity is a key concept in computational geometry and plays a critical role in various computer engineering applications, such as robotics, computer graphics, and optimization~\citep{Boissonnat_Yvinec_1998}. In Euclidean space, convexity  can be defined as a property of sets that are closed under convex combinations, where the set contains all line segments between any two points within it~\citep{roy2003convexitydiscrete}. We extend this notion of convexity to the Hamming space as follows.

\begin{definition}[Adapted from~\cite{Moser22tessellationfiltering}]
\label{def:convexity}
    A subset $S$ of the Hamming cube $\mathcal{H}^n$ is convex if, for every pair of points $x,y\in S$, all (observable) points on every shortest path between $x,y$ are also in $S$.\footnote{In the discrete activation space it might happen that some points (i.e., binary vectors) are non-observable (e.g., point $(011)$ for the tessellation in  Fig.~\ref{fig:straight_path_walks}).}
\end{definition}

\begin{figure}
    \centering
    \includegraphics[width=0.85\linewidth]{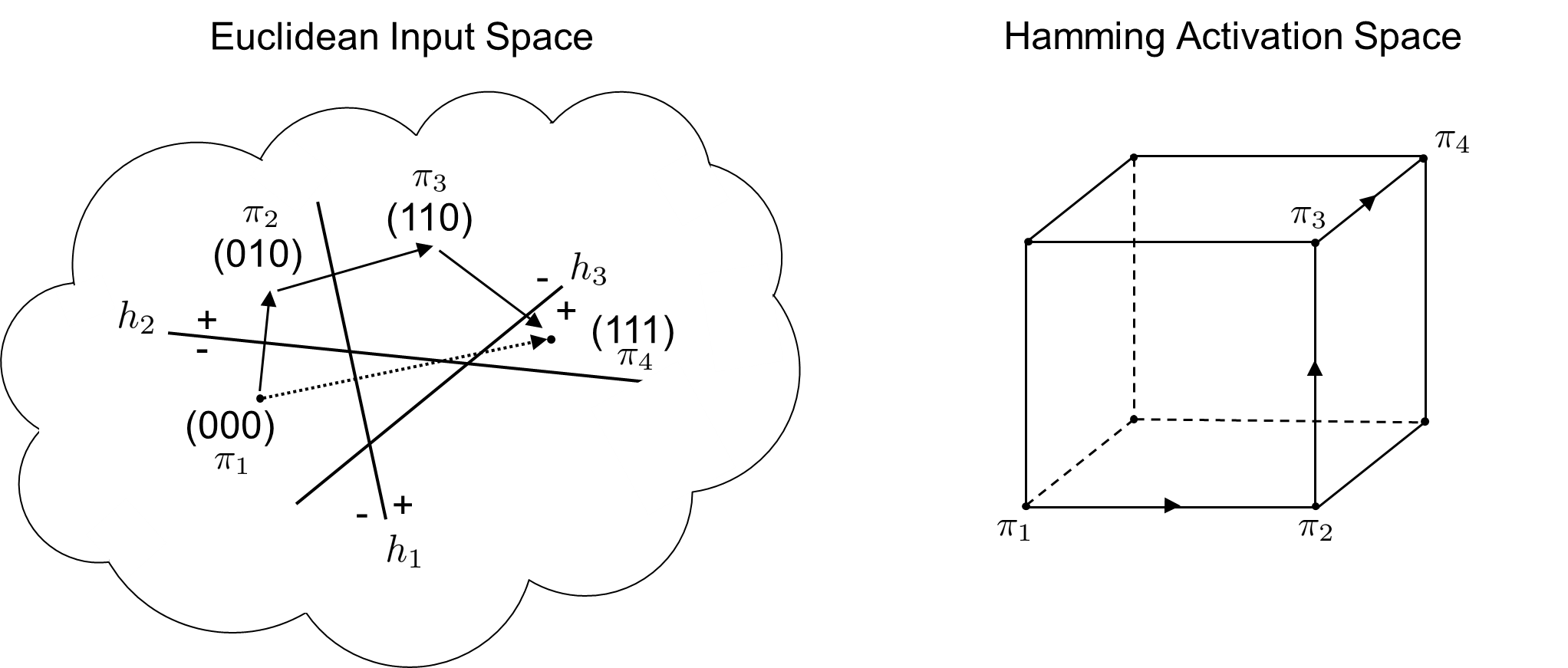}
    \caption{Illustration of a walk on a straight path in the Euclidean input space and the Hamming activation space.\textbf{ Left:} the dotted line represent the shortest path  in the Euclidean space. The arrows represent \emph{a} shortest path in the Hamming distance between activation patterns $\pi_1$ and $\pi_4$ (note that in the Hamming space the notion of the shortest path becomes ambiguous). \textbf{Right:} The illustration of a  shortest path connecting $\pi_1$ and $\pi_4$  in the Hamming activation space.}
    \label{fig:straight_path_walks}
\end{figure}

\begin{example}
    Consider points $\pi_1=(000)$ and $\pi_4=(111)$ (Fig.~\ref{fig:straight_path_walks}, right). Then the Hamming distance $d_H(\pi_1,\pi_4)=3$. Every shortest path consists of three edges, flipping one ``bit'' at a time, and thus a convex set is the whole Hamming cube $\mathcal{H}^3$.
\end{example}

\begin{example}
\label{ex:cantornet}
    Consider activation patterns  $\pi_1=(0111), \pi_2=(0001), \pi_3=(1011)$ as shown in Fig.~\ref{fig:convexity_cantornet}  (see Appendix~\ref{app:cantornet} for more details). For a walk through any two of the activation patterns,  (1) $\pi_1\to\pi_2$, (2) $\pi_2\to\pi_3$, (3) $\pi_1\to\pi_3$, there are intermediate  activation patterns $\pi_{inter}$ that we traverse. They are, respectively: (1) $\pi_{inter}=\{(0011), (0101)\}$, (2) $\pi_{inter}=\{(0011), (1001)\}$, (3) $\pi_{inter}=\{(0011), (1111)\}$.  As none of them is contained in $\{\pi_4,\pi_5,\pi_6\}$, we conclude that they are non-observable on the considered domain, $[0,1]\times[0,1]$. Thus, the activation patterns $\{\pi_1, \pi_2, \pi_3\}$ form a convex set in the Hamming cube sense. 
\end{example}

\begin{figure}[ht]
\centering \includegraphics[width=0.3\textwidth]{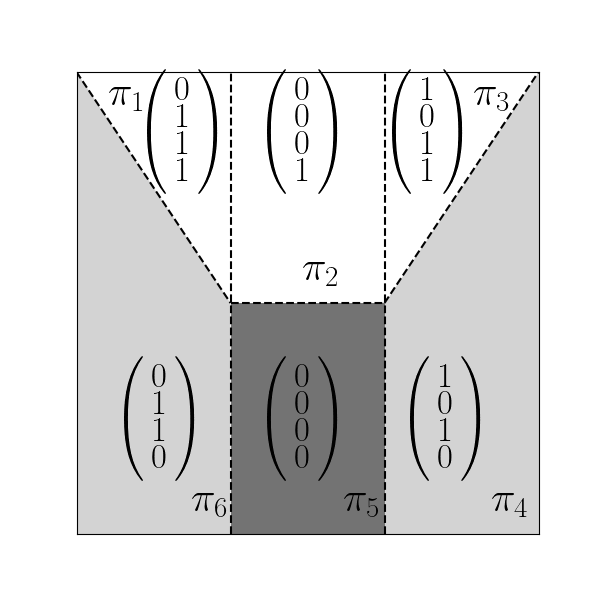}%
\caption{Activation patterns $\pi_i$ of recursion-based representation of CantorNet (see Appendix~\ref{app:cantornet}). We skip  neurons with unchanged values. The colours are used for increased visibility; activation patterns $\{\pi_1,\pi_2,\pi_3\}$ are convex in the Hamming cube sense (see Ex.~\ref{ex:cantornet}). The darker gray  of $\pi_5$ has been used to visually distinguish from $\pi_4$ and $\pi_6$. \color{black} (Adapted from~\cite{lewandowski2024cantornet} with the authors' approval.)}
\label{fig:convexity_cantornet}
\end{figure}

Before introducing the space folding measure, which relies on the notion of convexity, we prove the equivalance of convexity notions between the input and activation spaces for hyperplanes that intersect the entire input space. This further justifies the need of deeper layers to observe any space folding effects.

\begin{lemma}
\label{lem:convexity} 
Consider a tessellation of activation regions formed by $N$ hyperplanes $h_1, \ldots, h_N$ with activation regions $R_{\pi_1}, \ldots, R_{\pi_r} \subset \mathbb{R}^n$ and corresponding activation patterns $\mathcal{A}= \{\pi_1, \ldots, \pi_r\}$. A union \( R = \bigcup_{\pi \in \mathcal{A}} R_{\pi} \) of activation regions is convex in \( \mathbb{R}^n \) if and only if the set \( \mathcal{A} \) of corresponding activation patterns is convex in the Hamming space \( \mathcal{H}^m \).
\end{lemma}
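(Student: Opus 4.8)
The plan is to prove the two implications separately, using a single geometric workhorse: the image of a straight segment. Write each hyperplane as $h_i=\{\ell_i=0\}$ for an affine functional $\ell_i$, let $H_i^{0},H_i^{1}$ be its two (open) half-spaces, and recall that the pattern of a point $x$ records which side of each $h_i$ it lies on, so that $R_\pi=\bigcap_i H_i^{\pi_i}$. The first step, call it the \emph{segment lemma}, is to observe that for any $p,q$ the map $t\mapsto \ell_i\bigl((1-t)p+tq\bigr)$ is affine in $t$, hence changes sign at most once. Consequently a segment crosses each hyperplane at most once, crosses exactly the hyperplanes separating its endpoints, and never re-crosses one. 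Therefore the sequence of patterns met along $[p,q]$ flips each differing coordinate exactly once and leaves the agreeing coordinates fixed, i.e. it traces a shortest path in $\mathcal{H}^m$ between the two endpoint patterns, and every pattern it meets is observable. I also record the combinatorial reading of Definition~\ref{def:convexity}: $\mathcal{A}$ is Hamming-convex iff, for all $\pi,\pi'\in\mathcal{A}$, every \emph{observable} $\rho$ with $\rho_i=\pi_i$ on all coordinates where $\pi_i=\pi'_i$ (the Hamming interval) already lies in $\mathcal{A}$.

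For the direction ``$\mathcal{A}$ Hamming-convex $\Rightarrow R$ convex'', I would take $p\in R_\pi$ and $q\in R_{\pi'}$ with $\pi,\pi'\in\mathcal{A}$. By the segment lemma the patterns along $[p,q]$ all lie on a shortest path between $\pi$ and $\pi'$ and are observable, so Hamming-convexity places each of them in $\mathcal{A}$; hence every point of $[p,q]$ lies in some $R_\sigma$ with $\sigma\in\mathcal{A}$, giving $[p,q]\subseteq R$. To avoid a degenerate segment that passes through a codimension-$2$ flat and flips two bits simultaneously, I would first prove the claim for endpoints in general position inside the open cells, and then recover the general case by treating $R$ as closed and passing to a limit of such generic segments.

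For ``$R$ convex $\Rightarrow\mathcal{A}$ Hamming-convex'' I would avoid tracking individual segments and instead show that a convex union of cells is cut out by arrangement half-spaces. Since $R$ is a full-dimensional convex polyhedron whose boundary is contained in $\bigcup_i h_i$, each of its facets is a connected $(n-1)$-dimensional flat contained in the finite union $\bigcup_i h_i$, hence lies in a single hyperplane $h_i$. Writing $I$ for the set of these supporting hyperplanes and $c_i$ for the side on which $R$ lies, a convex polyhedron equals the intersection of its facet half-spaces, so $R=\bigcap_{i\in I}H_i^{c_i}$. Because the cells are disjoint, $R_\pi\subseteq R$ iff $\pi\in\mathcal{A}$, and $R_\pi\subseteq\bigcap_{i\in I}H_i^{c_i}$ iff $\pi_i=c_i$ for all $i\in I$; thus $\mathcal{A}=\{\pi\ \text{observable}:\pi_i=c_i\ \forall i\in I\}$. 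Hamming-convexity is then immediate: if $\pi,\pi'\in\mathcal{A}$ then $\pi_i=\pi'_i=c_i$ for $i\in I$, so any $\rho$ in their Hamming interval also has $\rho_i=c_i$ on $I$, and if $\rho$ is observable it lies in $\mathcal{A}$.

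I expect the main obstacle to be the geometric claim underlying the second implication: that the facets of the convex region $R$ lie on arrangement hyperplanes, so that $R$ is an intersection of arrangement half-spaces. The supporting argument is that $\partial R\subseteq\bigcup_i h_i$ (the boundary of a union of cells sits inside the union of the dividing hyperplanes) and that a connected, relatively open piece of a facet meeting finitely many hyperplanes must be contained in one of them; this needs a short dimension/Baire-category step and some care when cells are unbounded or the arrangement is not in general position. The remaining ingredients, namely the segment lemma and the bookkeeping relating $R_\pi\subseteq R$ to coordinatewise sign conditions, are routine. As a sanity check I would re-derive the equivalence on the arrangement of Example~\ref{ex:cantornet}, where the non-observable intermediate patterns illustrate exactly why the ``observable'' qualifier in Definition~\ref{def:convexity} is required (cf. also the non-observable point in Fig.~\ref{fig:straight_path_walks}).
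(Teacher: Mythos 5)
Your proposal is correct, and for the direction ``Hamming convexity implies Euclidean convexity'' it follows essentially the same route as the paper: trace the segment $[P,Q]$, note that each hyperplane crossing flips one bit, conclude that the traversed patterns form an observable shortest path, and invoke Hamming convexity of $\mathcal{A}$. Your explicit segment lemma (an affine functional changes sign at most once along a segment, so each hyperplane is crossed at most once and only the separating ones are crossed) makes precise what the paper merely asserts, and your treatment of degenerate segments through codimension-$2$ flats is a detail the paper skips. The genuine divergence is in the converse direction. The paper argues by contradiction: it supposes some shortest path between $\pi_i$ and $\pi_j$ leaves $\mathcal{A}$, exhibits the segment-induced shortest path that stays inside $\mathcal{A}$, and declares a contradiction. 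Under Definition~\ref{def:convexity}, which quantifies over \emph{every} shortest path, producing one good path does not refute the existence of a bad one, so the paper's argument as written leaves a gap. Your structural argument---a convex union of cells is a polyhedron whose facets lie on arrangement hyperplanes, hence $R=\bigcap_{i\in I}H_i^{c_i}$ and $\mathcal{A}$ is exactly the set of observable patterns agreeing with $c$ on $I$---controls the entire Hamming interval at once and therefore establishes convexity in the strong, all-shortest-paths sense that the definition actually demands. The price is the extra geometric lemma that each facet of $R$ lies in a single $h_i$ (plus the full-dimensionality and open/closed half-space caveats you flag); the payoff is a cleaner and logically tighter proof of that direction than the one in the paper.
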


\begin{proof}
  Convexity in \( \mathbb{R}^n \) $\Rightarrow$ Convexity in Hamming space:

Assume that  the set \( R = \bigcup_{\pi \in \mathcal{A}} R_{\pi} \) is convex in \( \mathbb{R}^n \). We want to show that  the set \( \mathcal{A} \) is convex in the Hamming space. We start with showing the connectivity of \( \mathcal{A} \). Let \( \pi_i, \pi_j \in \mathcal{A} \) be any two activation regions, and choose any points \( P \in R_{\pi_i} \) and \( Q \in R_{\pi_j} \) in respective activation regions. Since \( R \) is convex, the line segment \( [P, Q] \) lies entirely within \( R \). As we move along \( [P, Q] \), we may cross hyperplanes \( h_k \) where the activation state changes. Each such crossing corresponds to flipping exactly one bit in the activation pattern (see Fig.~\ref{fig:illustration_proof_convexity}).  This sequence of bit flips forms a path in the Hamming space from \( \pi_i \) to \( \pi_j \), showing that \( \mathcal{A} \) is connected. Let us now show the convexity of \( \mathcal{A} \). Assume, for contradiction, that \( \mathcal{A} \) is not convex in the Hamming space. Then, there exists a shortest path \( \gamma \) in the Hamming space connecting \( \pi_i \) and \( \pi_j \) that leaves \( \mathcal{A} \); that is, some activation patterns along \( \gamma \) are not in \( \mathcal{A} \). However, from connectivity, the path corresponding to the line segment \( [P, Q] \) stays entirely within \( \mathcal{A} \), as it corresponds to activation patterns of points within \( R \). Since \( [P, Q] \) is a straight line, it corresponds to a minimal sequence of bit flips (i.e., a shortest path in the Hamming space). Therefore, there exists a shortest path within \( \mathcal{A} \), contradicting the assumption. Hence, \( \mathcal{A} \) is convex in the Hamming space.

Convexity in Hamming space $\Rightarrow$ Convexity in \( \mathbb{R}^n \):

Now assume that the set \( \mathcal{A} \) of activation patterns is convex in the Hamming space. We want to show that the union \( R = \bigcup_{\pi \in \mathcal{A}} R_{\pi} \) is convex in \( \mathbb{R}^n \).  Let \( P, Q \in R \) be any two points, and denote by  \( R_{\pi_1}, R_{\pi_3} \in \mathcal{A} \) their activation patterns.  Consider the line segment \( [P, Q] \) in \( \mathbb{R}^n \).
 As we move from \( P \) to \( Q \), we may cross hyperplanes \( h_k \), changing activation patterns. Each crossing of a hyperplane \( h_k \) corresponds to flipping a bit in the activation pattern, forming a path in the Hamming space from \( \pi_P \) to \( \pi_Q \), as previously (again, see Fig.~\ref{fig:illustration_proof_convexity}).
Since \( \mathcal{A} \) is convex in the Hamming space, all shortest paths between \( \pi_P \) and \( \pi_Q \) remain within \( \mathcal{A} \), and in particular, the sequence of activation patterns along \( [P, Q] \) is such a shortest path. Therefore, all activation patterns along \( [P, Q] \) are in \( \mathcal{A} \). Since every point along \( [P, Q] \) has an activation pattern in \( \mathcal{A} \), it lies within \( R \). Thus, \( [P, Q] \subset R \), showing that \( R \) is convex in \( \mathbb{R}^n \).
\end{proof}

\begin{figure}
    \centering
    \includegraphics[width=0.42\linewidth]{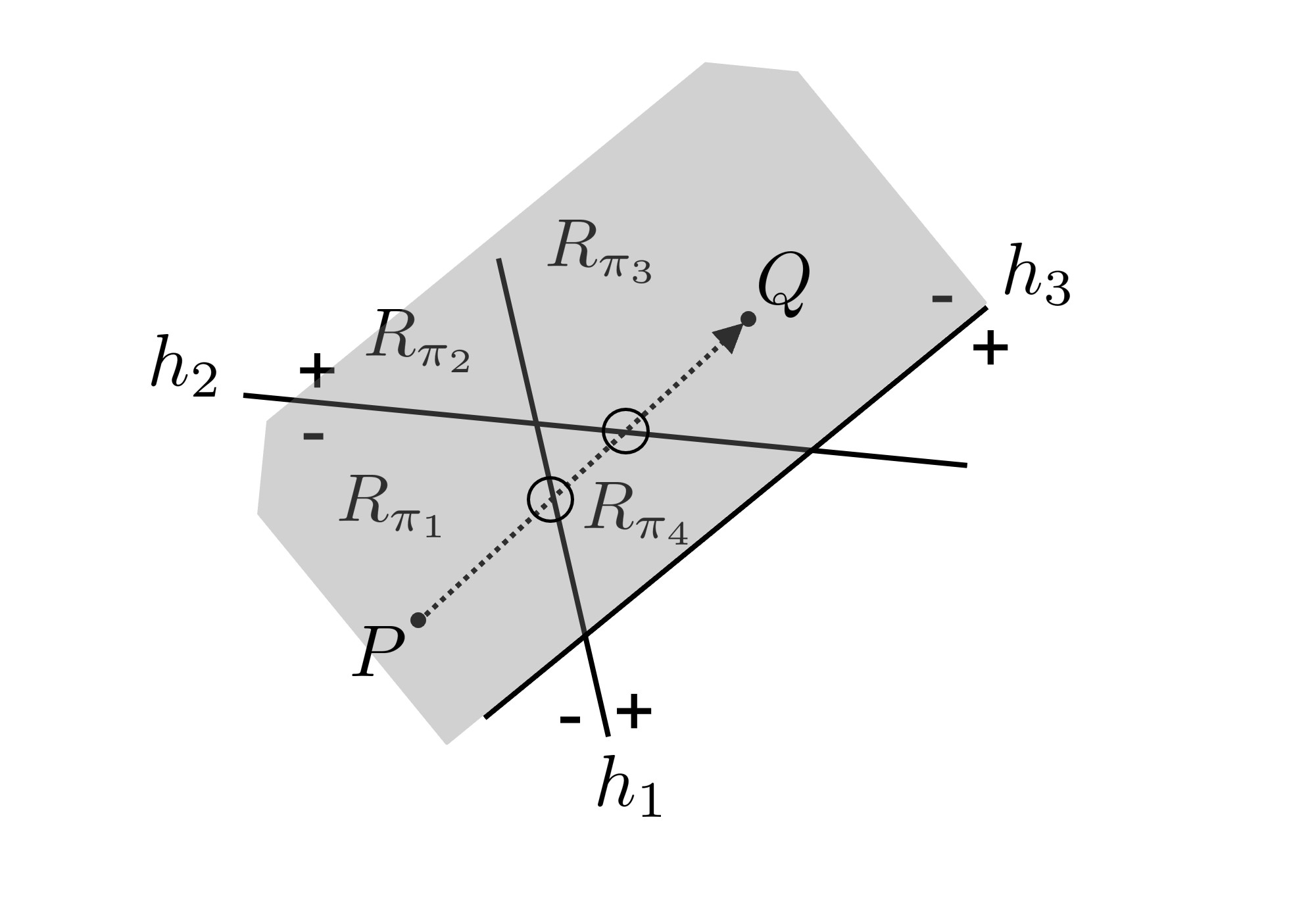}
    \caption{The shaded gray area illustrates a convex set in the Euclidean space. The hyperplanes $h_1,h_2,h_3$ intersect the entire input space (it holds for the hyperplanes described by neurons from the first hidden layer of a ReLU neural network). A straight line $[P,Q]$ connecting points $P$ and $Q$ crosses hyperplanes $h_1$ and $h_3$, resulting in a ``bit'' flip at a time.}
    \label{fig:illustration_proof_convexity}
\end{figure}

Lemma~\ref{lem:convexity} is important  for two reasons. First, its direct consequence  is that the space folding effects are observable only in networks with at least two hidden layers. Moreover, it proposes another angle to see the classic  XOR problem~\citep{minsky1969perceptrons}, and why  one layer is insufficient for class separation. Second,  observe  limitations of Lemma~\ref{lem:convexity}: We assume that along a walk on a shortest path (in the Hamming sense)  between any activation patterns corresponding to linear regions in a  convex arrangement $\mathcal{A}$ in the Euclidean input space, the Hamming distance of adjacent linear regions differs by one, which only holds in case of hyperplanes that intersect the entire input space, and such hyperplanes are described by the 1$^\text{st}$ hidden layer of a ReLU neural network (e.g.,~\cite{raghu2017expressive}). For deeper layers, there may appear activation regions, which, although neighboring, may have the Hamming distance exceeding one (see Example~\ref{ex:cantornet}).

\section{Analysis in the Activation Space}\label{sec:spaceFolding}

\paragraph{Range Measures.}
Our space folding measure is inspired by the construction of range measures of random walks. Consider  a walk along a line  given by the sequence $s = (s_k)_{k=0}^N$ of $N$ steps of length $s_i$ at $i^{\text{th}}$ step. At each step $i$ the walk can go either up or down. The maximum absolute route amplitude of the walk is given by
$r_A(s) := \max_{n\leq N} |\sum_{j=0} ^n s_j|$,  dependent on the choice of the coordinate system. However, we can remove this dependency by considering $r_D(s):= \max_{n\leq N} \{\sum_{k=0} ^n s_k, 0\} - \min_{n\leq N} \{\sum_{k=0} ^n s_k, 0\} = 
\max_{k \leq n \leq N} |\sum_{j=k} ^n s_j|$, which  represents the diameter of the walk  invariant under translational coordinate transformations~\citep{Moser2014RandomWalk,moser2017similarity}. Both $r_A(s)$ and  $r_D(s)$ are examples of range metrics encountered in the field of random walks in terms of an asymptotic distribution resulting from a diffusion process~\citep{Jain1968}. It turns out that $r_A(s)$ and $r_D(s)$  are norms,   $\|s\|_A$ and $\|s\|_D$, respectively~\citep{Alexiewicz1948}. Note that $\|s\|_{A} \leq \|s\|_{D} \leq 2 \|s\|_{A}$,  stating the norm-equivalence of $\|.\|_{A}$ and $\|.\|_{D}$.

\begin{example}
    A simple example of a range measure is a variance estimator of a sample with $n$ observations, $\mathbf{x}=(x_1,\ldots, x_n)$, defined as $\Var(\mathbf{x})=\frac 1{n(n-1)} \sum_{i=1}^n (x_i-\Bar{x})^2$, where $\Bar{x}$ is the sample's arithmetic average. 
\end{example}

\paragraph{The Space Folding Measure.}

Consider a straight line connecting two samples $\mathbf{x}_1,\mathbf{x}_2$ in the Euclidean input space realized as a convex combination $(1-\lambda_i) \mathbf{x}_1  +  \lambda_i \mathbf{x}_2$,   where $\lambda_i$ are equally spaced on $[0,1]$ (the equal spacing is due to practicality, but is not necessary). Then, consider the mapping of the straight line $[\mathbf{x}_1,\mathbf{x}_2]$ to a  \textit{path} $\Gamma$ in the Hamming activation space, with intermediate points $(\pi_1,\ldots,\pi_n),\ \pi_i\in\mathcal{H}^N$ under a neural network $\mathcal{N}$ (see Fig.~\ref{fig:convex_dev}, left). We consider a change in the Hamming distance at each step $i$ 
\begin{equation}
\label{eq:deltas}
\Delta_i := d_H(\pi_{i+1}, \pi_1) - d_H(\pi_{i}, \pi_1).
\end{equation}
We then look at the maximum of the cumulative change $\max_k \sum_{i=1}^k |\Delta_i|$ along the path $\Gamma$, 
\begin{equation}
\label{eq:cum_max}
    r_1(\Gamma) = \max_{i\in\{1,\ldots,n-1\}}\sum_{j=1}^{i}\left(d_H(\pi_{j+1}, \pi_1) - d_H(\pi_{j},\pi_1)\right)= \max_{i\in\{1,\ldots,n-1\}} d_H(\pi_i,\pi_1).
\end{equation}
The above expression equals to the maximum Hamming distance to the starting point reached along the path. Next, we keep track of  the total distance traveled on the hypercube when following the path, 
\begin{equation}
\label{eq:eff_path}
 r_2(\Gamma) = \sum_{i=1}^{n-1}d_H(\pi_i, \pi_{i+1}).
\end{equation}
For the measure of space flatness, we consider the ratio $\phi(\Gamma):=r_1(\Gamma)/r_2(\Gamma)$. Equivalently,  the space folding measure equals
\begin{equation}
\label{eq:measureFold}
\chi(\Gamma):=1-\phi(\Gamma) = 1-\max_{i\in\{1,\ldots,n\}} d_H(\pi_i,\pi_1) \big/
\sum_{i=1}^{n-1} d_H(\pi_{i}, \pi_{i+1}).
\end{equation}

Lemma~\ref{lem:convexity} guarantees that a straight line in both the input and the activation space is convex, and  $\chi$  measures the deviation from convexity along this path, effectively measuring deviation from flatness, hence its name. The higher $\chi$ is, the more folded the space is along the path $\Gamma$. We say that the space is flat if it is not 
folded, and in that sense ``folding'' is opposite to ``flatness''.

\begin{lemma}
    For every path $\Gamma$ the space folding measure satisfies $0\leq \chi(\Gamma)\leq 1$ (provided that $\sum_{i=1}^{n-1}d_H(\pi_i,\pi_{i+1})>0$, i.e., the path $\Gamma$ traverses more than one  region).
\end{lemma}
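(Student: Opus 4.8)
The plan is to reduce the double inequality $0 \le \chi(\Gamma) \le 1$ to the equivalent statement $0 \le r_1(\Gamma) \le r_2(\Gamma)$. Since the hypothesis guarantees $r_2(\Gamma) = \sum_{i=1}^{n-1} d_H(\pi_i,\pi_{i+1}) > 0$, the ratio $\phi(\Gamma) = r_1(\Gamma)/r_2(\Gamma)$ is well defined, and because $\chi = 1 - \phi$, bounding $\chi$ in $[0,1]$ is exactly the same as bounding $\phi$ in $[0,1]$, i.e. showing $0 \le r_1 \le r_2$. I would prove the two inequalities separately.

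For the upper bound $\chi(\Gamma) \le 1$, I would note that it is equivalent to $\phi(\Gamma) \ge 0$. This is immediate: the Hamming distance is nonnegative, so $r_1(\Gamma) = \max_{i} d_H(\pi_i,\pi_1) \ge 0$, and since $r_2(\Gamma) > 0$ by assumption, the quotient $\phi(\Gamma)$ is nonnegative. Hence $\chi(\Gamma) = 1 - \phi(\Gamma) \le 1$.

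For the lower bound $\chi(\Gamma) \ge 0$, which is equivalent to $\phi(\Gamma) \le 1$, i.e. $r_1(\Gamma) \le r_2(\Gamma)$, the key tool is the triangle inequality for the Hamming metric $d_H$ on $\mathcal{H}^N$. First I would establish, for each fixed index $i$, the bound $d_H(\pi_i,\pi_1) \le \sum_{j=1}^{i-1} d_H(\pi_j,\pi_{j+1})$ by a short induction on $i$: the base case is trivial, and the inductive step is a telescoping application of the triangle inequality $d_H(\pi_{i+1},\pi_1) \le d_H(\pi_{i+1},\pi_i) + d_H(\pi_i,\pi_1)$ along the consecutive vertices of the path. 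Since every summand $d_H(\pi_j,\pi_{j+1})$ is nonnegative, the partial sum is dominated by the full sum, giving $d_H(\pi_i,\pi_1) \le \sum_{j=1}^{n-1} d_H(\pi_j,\pi_{j+1}) = r_2(\Gamma)$ for every $i$. Taking the maximum over $i$ yields $r_1(\Gamma) \le r_2(\Gamma)$, hence $\phi(\Gamma) \le 1$ and $\chi(\Gamma) \ge 0$.

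The argument has essentially no genuine obstacle: it is the discrete metric-space version of the elementary fact that the displacement from the starting point is never larger than the total length of the path traversed. The only points requiring care are the bookkeeping of indices in the telescoping/inductive step and making explicit that the positivity hypothesis on $r_2(\Gamma)$ is precisely what permits forming the ratio $\phi$ and preserves the direction of the inequalities under division.
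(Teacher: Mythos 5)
Your proof is correct, but it distributes the work differently from the paper's. You treat the upper bound $\chi(\Gamma)\le 1$ as the trivial direction (it is just $r_1(\Gamma)\ge 0$ divided by the positive quantity $r_2(\Gamma)$) and put all the effort into the lower bound $\chi(\Gamma)\ge 0$, which you establish via the telescoped triangle inequality $d_H(\pi_i,\pi_1)\le\sum_{j=1}^{i-1}d_H(\pi_j,\pi_{j+1})\le r_2(\Gamma)$, i.e.\ the classic ``net displacement is at most total path length'' argument; this gives a complete and self-contained proof of both inequalities. The paper instead details only the upper bound and dismisses the lower bound as ``obtained in a similar way'': it sums the triangle inequalities $d_H(\pi_i,\pi_{i+1})\le d_H(\pi_i,\pi_1)+d_H(\pi_1,\pi_{i+1})$ over $i$ to get $r_2(\Gamma)\le 2(n-1)\max_i d_H(\pi_i,\pi_1)$, which yields the quantitatively sharper conclusion $\chi(\Gamma)\le 1-\tfrac{1}{2(n-1)}$. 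So the paper's route buys a strict, $n$-dependent refinement of the upper bound (consistent with their remark that $\chi\to 1$ only as the path length grows), whereas your route is more elementary for the upper bound and, unlike the paper, actually writes out the argument for the lower bound --- which is arguably the direction that genuinely needs the triangle inequality. Both are valid; if anything, a referee would prefer your version for completeness, optionally augmented with the paper's observation to record the sharper bound $1-\tfrac{1}{2(n-1)}$.
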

\begin{proof}
    We only show the upper bound as the lower is obtained in the similar way. From the triangle inequality for any activation patterns $\pi_1,\pi_i,\pi_{i+1}\in\mathcal{H}^N$ it holds that $d_H(\pi_i, \pi_{i+1})\leq d_H(\pi_i,\pi_1) + d_H(\pi_1, \pi_{i+1})$. Writing this for every index $i\in\{1,\ldots, n-1\},\ n>2$, and summing by sides we obtain
    $$
    \sum_{i=1}^{n-1}d_H(\pi_i,\pi_{i+1})\leq \sum_{i=1}^{n-1}d_H(\pi_i, \pi_1)+\sum_{i=1}^{n-1}d_H(\pi_1, \pi_{i+1})\leq 2(n-1) \max_{i}d_H(\pi_i,\pi_1).
    $$
    Recall that $\sum_{i}d_H(\pi_i,\pi_{i+1})>0$ and divide each side by this sum. It follows that 
    $$
    \chi(\Gamma) \leq 1-\frac{1}{2(n-1)}\leq 1.
    $$
\end{proof}

To understand the motivation behind the construction of the measure, consider a straight path $\Gamma$ in the Euclidean input space that gets mapped to a straight path in the Hamming space. In this case, the range measures $r_1,r_2$ increase and their ratio $r_1(\Gamma)/r_2(\Gamma)=1$, thus there is no space folding, i.e., $\chi(\Gamma)=0$. If a straight path in the Euclidean path gets mapped to a curved path in the Hamming activation space (Fig.~\ref{fig:convex_dev}, left), we observe non-zero values of the space folding measure $\chi$. The space folding can equal $\chi(\Gamma)=1$ in the  case presented in Fig.~\ref{fig:convex_dev}, right. Consider a path $\Gamma=(\pi_1,\pi_2,\pi_1,\pi_2,\ldots)$. Then, the range measure $r_1(\Gamma)=1$ and $r_2(\Gamma)\to\infty$, hence $\chi(\Gamma)\to1$. Our measure can be made global  by considering the supremum over all possible paths  $\Gamma$ in the Hamming activation space, i.e., 
\begin{equation}
\label{eq:global_measure}
    \Phi_\mathcal{N} := \sup_{\Gamma\in\mathcal{X}}\chi(\Gamma).
\end{equation}

\begin{figure}
    \centering
    \includegraphics[scale=0.45]{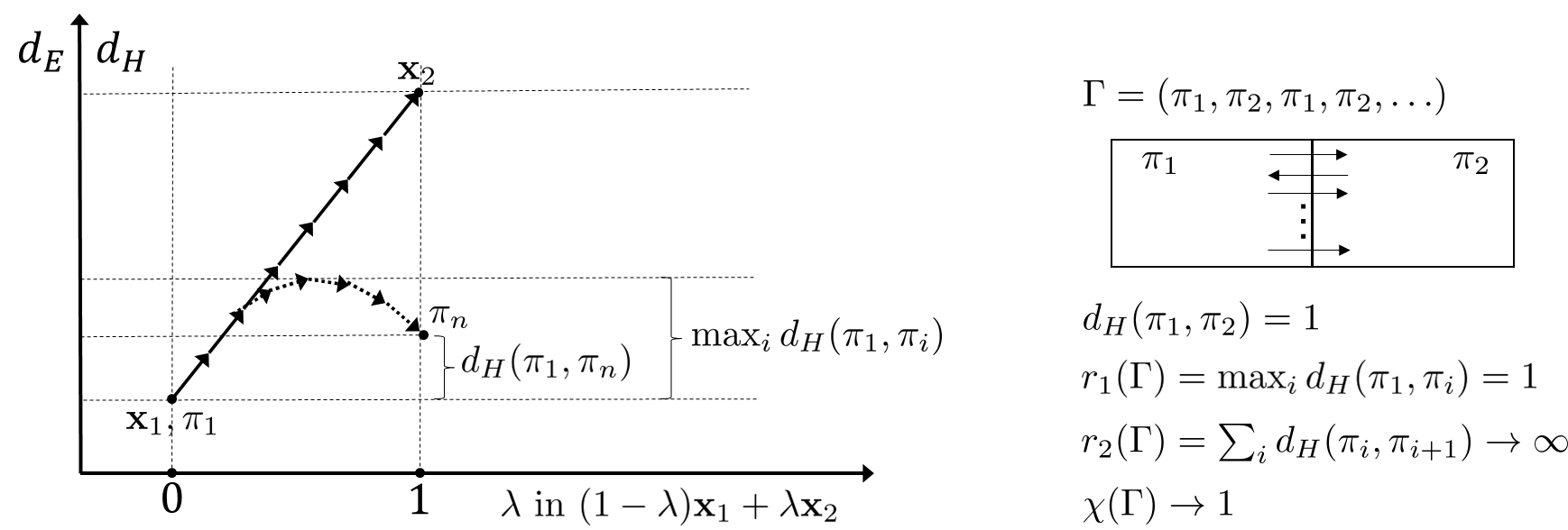}
    \caption{\textbf{Left}: Straight line between $\mathbf{x}_1$ and $\mathbf{x}_2$ in the Euclidean space. Observe that, when mapped to the Hamming activation space (dotted arrows), the Hamming distance may decrease when following the path, i.e., it might happen that $d_H(\pi_1,\pi_n)<\max_id_H(\pi_1,\pi_i)$. \textbf{Right}: An extreme case when space folding $\chi(\Gamma)=1$. Note that it is sufficient that $r_1(\Gamma)=c$ for some $c\in\mathbb{R}_+$, and that the path $\Gamma$ is looped between the same regions, resulting in $r_2(\Gamma)\to\infty$. This construction, although theoretically possible, might not be realizable in practice.}
    \label{fig:convex_dev}
\end{figure}

\paragraph{The Algorithm.} 

In this paragraph, we present the algorithm for computing the space folding measure $\chi$ introduced earlier, along with its associated computational complexity.

\begin{algorithm}[H]
\caption{Computation of the Space Folding Measure (Eq.~\ref{eq:measureFold})}
\KwIn{Two input samples $\mathbf{x}_1$, $\mathbf{x}_2$, the number of intermediate points $n$, the total number of hidden neurons $N$, cost of running the network in the inference mode $O(\texttt{C})$}
\KwOut{Space Folding $\chi(\Gamma)$ as in Eq.~\ref{eq:measureFold}}
\label{alg:space_folding}
\textbf{Step 1:} Linearly interpolate $\mathbf{x}_1$ and $\mathbf{x}_2$, sampling $n$ points\tcp*[r]{Sampling Complexity: $O(n)$}

\textbf{Step 2:} For each sampled point:

\Begin{
    Compute the binarisation\tcp*[r]{Binarization Complexity Per Point: $O(\text{C})$}
}
\tcp{Total Binarization Complexity: $O(n \cdot \text{C})$}

\textbf{Step 3:} Compute the maximal (from the starting point) and total Hamming distances between intermediate points\tcp*[r]{Computation of Range Measures Complexity: $O(n\cdot N)$}

\Return Space Folding $\chi(\Gamma)$\tcp*[r]{Total Algorithm Complexity: $O\left(n \cdot (N+\texttt{C})\right)$}
\end{algorithm}

In the comments for each line, we have included the computational complexity of the corresponding operation. The complexity of running the network in inference mode is denoted as $O(\texttt{C})$. The total computational complexity for every pair of samples from classes $C_1$ and $C_2$ is thus 
\begin{equation}
\label{eq:space_folding_complexity}
    O\left(n \cdot (N+\texttt{C})\cdot |C_1| \cdot |C_2|\right),
\end{equation}
where $|C_1|,|C_2|$ denote the cardinalities of the respective classes of points $\mathbf{x}_1,\mathbf{x}_2$.
Since $N+\texttt{C}=\text{const}$, the overall computational cost can be controlled by adjusting $n$ or by subsampling within the classes. In the following, we establish an upper bound on the number of intermediate steps $n$. For a neural network with $N$ hidden neurons, consider two input samples $\mathbf{x}_1$ and $\mathbf{x}_2$. The maximum Hamming distance between their corresponding activation patterns, $\pi_1$ and $\pi_2$, equals $d_H(\pi_1,\pi_2)=N$. Consequently, the number of linear regions between inputs $\mathbf{x}_1$ and $\mathbf{x}_2$ is at most $N$. Since intermediate steps that fall within the same activation region do not affect the space folding measure, the number of steps $n$ should not exceed $N$.
We  note that evenly spacing intermediate points for the computation of the space folding measure $\chi$ may be suboptimal. Instead, the focus should be on traversing \emph{all} linear regions between the two samples. Optimization of the path will be addressed in a future submission.

Secondly, to address the (potentially) high cardinality of classes $C_1$ and $C_2$, we propose the following approach. Let $m_1 < |C_1|$ and $m_2 < |C_2|$, where $m_1$ and $m_2$ represent the number of clusters into which the samples in classes $C_1$ and $C_2$ are grouped. For each cluster in respective class $C_1$ or $C_2$, we select a centroid $(c_{1i})_{i=1}^{m_1}$ and $(c_{2i})_{i=1}^{m_2}$, and  compute the space folding measure $\chi$ between every pair of centroids, instead of using the original samples. By reducing $m_1$ and $m_2$, we can significantly lower the computational cost of calculating the measure $\chi$. The exact impact of this reduction is left for future work,  however we provide preliminary results of impact of such grouping on the space folding values  in Appendix~\ref{sec:grouping_sensitivity}.

\section{Experiments}
\label{sec:experiments}

\subsection{Experimental Setup}

\paragraph{CantorNet.}
We start the experimental evaluation of our measure on CantorNet, a hand-designed example inspired by the fractal construction of the Cantor set~\citep{lewandowski2024cantornet} (see App.~\ref{app:cantornet} for the summary). 
\begin{figure*}[ht]
\centering 
\includegraphics[width=0.32\textwidth]{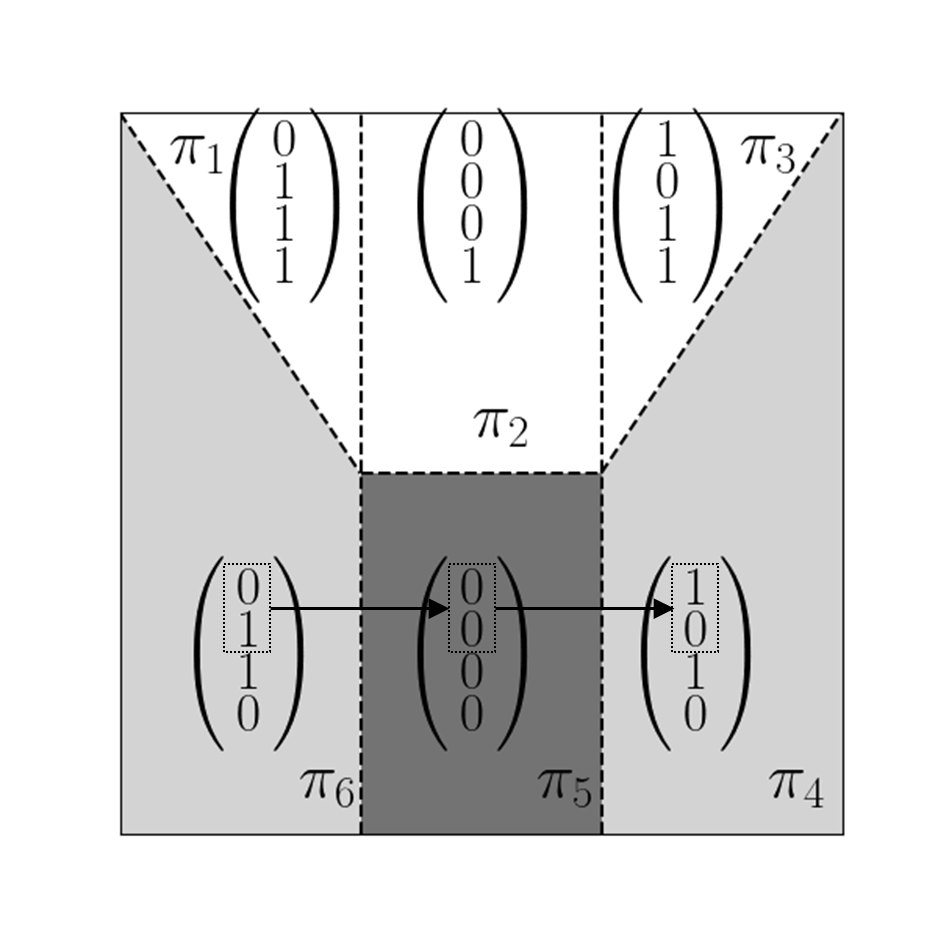}
\includegraphics[width=0.32\textwidth]{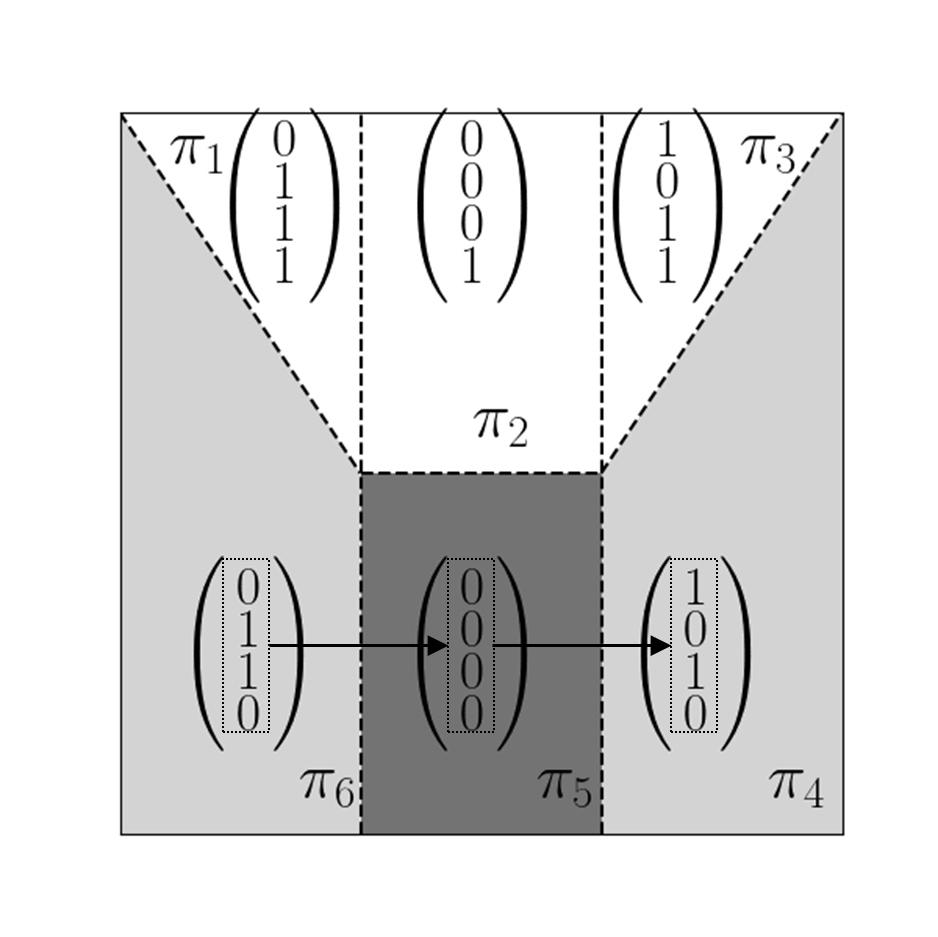}
\caption{Activation patterns $\pi_i$ of the recursion-based representation of CantorNet.  For the computation of the space folding measure $\chi$ we can consider a subset of layers;  \textbf{left:} Highlighted activations in the first layer, \textbf{right:} All layers.  For a path  $\Gamma=(\pi_6,\pi_5,\pi_4)$, the folding $\chi(\Gamma)=0$ if we consider only the activations from the first layer, while $\chi(\Gamma)=\frac12$ if we consider all the layers. Colours are used for increased visibility; ``white'' patterns form a convex set in the Hamming cube sense (see Ex.~\ref{ex:cantornet}). We skip  neurons with unchanged values.}
\label{fig:TessellationModelA}
\end{figure*}
 First, note that we may consider a subset of layers for the computations of the space folding measure; however, we risk not detecting any folds. Indeed, for CantorNet of the recursion level $k=1$, consider a path $\Gamma:\pi_6\to\pi_5\to\pi_4$ (see Fig.~\ref{fig:TessellationModelA}). If we  consider  only  the activation patterns in the first layer, we obtain $\chi(\Gamma) = 0$, while if we include all the layers, then $\chi(\Gamma) = \frac 12$.

Next, we set the recursion depth to $k=2$ (Fig.~\ref{fig:range_path_cantornet_recursion_based}, the background), and create a path $\Gamma$ between  points $\mathbf{x}_1=(0,\frac34), \mathbf{x}_2=(1,\frac34)$  (Fig.~\ref{fig:range_path_cantornet_recursion_based}, the arrows). We illustrate  the evolution of range measures  $r_1, r_2$  along path $\Gamma$ (Fig.~\ref{fig:range_path_cantornet_recursion_based}, dotted and dashed curves, respectively), and the  corresponding space folding measure (in blue). In this example, we obtain a higher space folding value than for the recursion level $k=1$, $\chi(\Gamma)\approx0.7$.

\begin{figure}
    \centering
    \includegraphics[width=0.65\linewidth]{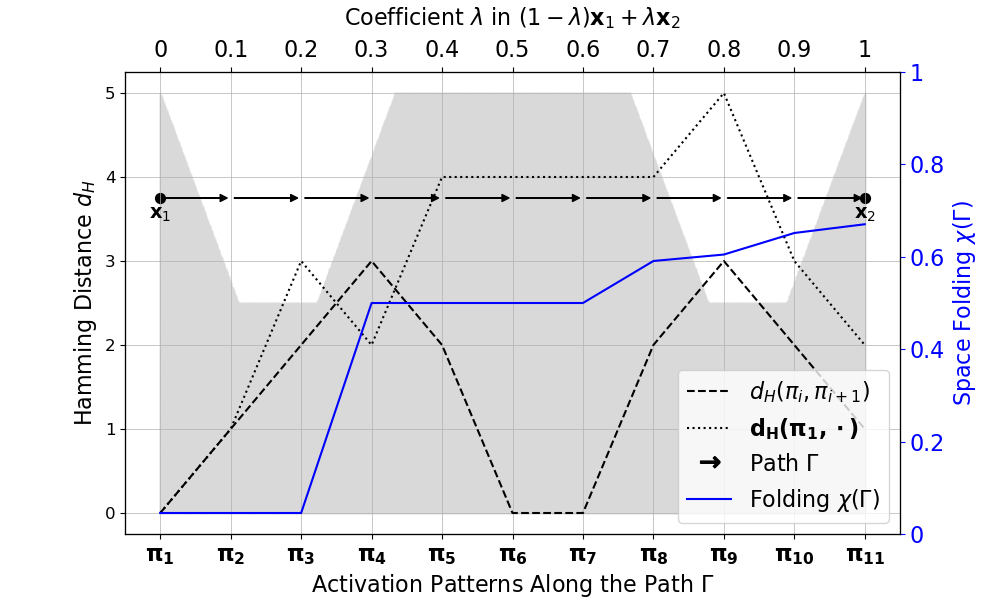}
    \caption{Behaviour of $d_H(\pi_1,\cdot)$ for respective $\pi_i$ (dotted line) and distance between neighboring patterns $d_H(\pi_i,\pi_{i+1})$ (dashed line) on path $\Gamma$ constructed between points $\mathbf{x}_1=(0,\frac34)$ and $\mathbf{x}_2=(1,\frac34)$ (indicated by arrows). Background represents CantorNet recursion-based representation at recursion level $k=2$ (see App.~\ref{app:cantornet}). For the illustration purposes, it has been scaled to fill the figure, but its domain is a unit square $[0,1]\times[0,1]$.  The cumulative maximum (Eq.~(\ref{eq:cum_max})) is $d_H(\pi_1,\pi_9)=5$.  Note that the Hamming distance $d_H$ between the initial activation pattern $\pi_1$ \emph{can} decrease (dotted line), indicating deviations from convexity, as discussed previously. The blue line represents the space folding measure $\chi(\Gamma)$. (Best viewed in colors.)}
    \label{fig:range_path_cantornet_recursion_based}
\end{figure}

\paragraph{MNIST.}
Further, we study the behavior of the space folding measure on ReLU neural nets trained on  MNIST~\citep{MNIST}. We keep the number of hidden neurons constant (equal 60), and we experiment with the depth and the width of the network, trying the following architectures: $2\times 30, 3\times 20, 4\times15, 5\times 12, 6\times 10, 10\times 6$, with the notation (no. layers)$\times$(no. neurons).\footnote{As underpinned by Lemma~\ref{lem:convexity}, we do not expect to see folding effects in a network with 1 hidden layer and 60 neurons and hence we omit it.}
We then train those networks for 30 epochs on pre-defined random seeds (in Python NumPy and Torch), and store their parameters.  
For the networks that we managed to train to a high validation accuracy (especially for deeper networks it is highly dependent on the initialization), we analyze the relationship between the depth of a ReLU network and the aggregated median of maximas of non-zero space folding across all pairs of digits in the MNIST test set ($100$~pairs, $\sim1M$~paths $\Gamma$ for each pair),  obtaining  the Pearson correlation coefficient  $0.987$ for the random seed (equal 4) where the networks performed the best. Interestingly, for the networks that did not reach satisfactory validation accuracy, the aggregated space folding was much lower than for networks that trained well. It hints that the symmetries that arise with depth help with achieving some generalization ability.  See Appendix~\ref{sec:heatmaps} for more details. 

We further perform similar experiments on much larger networks with $2\times300, 3\times200$, for no. layers $\times$ no. neurons.  We observed that, while the folding values do not vary much, the ratio of paths that feature space folding increases dramatically:  from $0.35\pm0.1$ and $0.44\pm0.15$ for networks $2\times30$ and $3\times20$, respectively, to $0.97\pm0.04$ and $0.99\pm0.02$ for networks $2\times300$ and $3\times 200$, respectively. For more details see Appendix~\ref{sec:deeper_networks}.

\subsection{Results}

In Fig.~\ref{fig:range_path_cantornet_recursion_based}, we have shown an interesting phenomenon.
We first mapped a straight-line walk from the Euclidean input space to the Hamming activation space.
During this mapping, while walking along the mapped path, we have sometimes observed a \emph{decrease} in the Hamming distance with respect to the initial input point, while in the input space, the Euclidean distance is increasing.
This indicates that there is a replication of the activation pattern along the path in the activation space, which we call \emph{folding}. 
This is important, as it allows us to understand how neural networks transform and compress input data, revealing the intrinsic geometric properties of the network's activation space. 
In the next section, we provide more discussions and hypothesis for these results.

\subsection{Discussion}
\paragraph{Beyond MLPs.} In the previous section, we have provided  results on  space folding by ReLU neural networks. So far, we have used all the hidden layers; however, a subset of hidden layers could also be utilized. We now provide additional remarks on the measure's utility when applied to different types of layers:

\begin{itemize} 
\setlength{\itemsep}{3pt} 
\setlength{\parskip}{0pt} 
\item \textbf{Residual layers}: Our study extends straightforwardly to networks with skip connections (e.g., ResNet~\citep{He2016residual}).
\item \textbf{Normalization layers}: Normalization layers modify the coefficients of the weight matrices, which consequently alters the overall structure of the tessellation. However, our measure remains applicable as before. The exact impact of normalization is left for future study.
\item \textbf{Attention layers}: While our measure may not be directly applicable to attention layers in its current form, it has been shown that ReLU can be used in self-attention mechanisms. Thus, our measure could potentially be applied to such layers~\citep{shen2023study}.
\end{itemize}

\section{Future Work} 
\label{sec:future_work}

In this work, we have proposed a novel method to measure deviations from  convexity between a walk on a straight line in the Euclidean input space and its mapping to the Hamming activation space.  Further, we have used  the introduced measure to analyze deviations from convexity under a synthetic example,   CantorNet,  and ReLU neural networks trained on MNIST. Our work  highlights the convexity transformation of the input space by a ReLU neural network.
In our experiments, we have tried various pairs of digits (intra- and inter-class) and observed qualitatively similar behavior across tested architectures: the max value of $\chi$ increases with the depth of the network,  if it was trained to a high validation accuracy, and decreases if the achieved accuracy was lower. 
We thus hypothesize that the maximal value of $\chi$ is associated with the network's generalization capacity, as depth has been shown to be necessary (but insufficient) for generalization in neural networks by enabling deeper layers to learn hierarchical features~\citep{telgarsky2015representation,telgarsky16benefitsofdepth}.
We  summarize our findings in the form of a hypothesis (The question) and its evidence (The answer).

\begin{tcolorbox}[colframe=gray, colback=white!10, title=The question:]
Given a space folding value $\chi(\Gamma)=\tau\in[0,1]$ for some path $\Gamma$, what can we learn?
\end{tcolorbox}

\begin{tcolorbox}[colframe=gray, colback=white!10, title=The answer:] 
We propose to see the space folding measure $\chi$  as a \emph{feature} of a neural network. It tells us how twisted  the Euclidean  input space becomes during the learning process.  It is upper and lower bounded, thereby it may serve as a reference point for various models.  As we have seen, on  neural networks with a high degree of self-similarity (CantorNet), it can reach values close to 0.8, whereas for  simple  networks trained on MNIST to high validation accuracy the folding values are significantly lower.  

Moreover, the space folding measure $\chi$ provides insights into how effectively a network encodes self-similarities: higher values indicate more compact and efficient representations. Conversely, a low value suggests potential room for improvement in the network's architecture for the given problem.
\end{tcolorbox}

We now outline several directions and discuss them briefly, with the hope that the community will build upon our efforts and advance these directions further.
Although we consider this work to be a milestone that establishes a novel perspective on space transformation by neural networks, there are several important next steps we identify as future work.

Firstly, addressing the computational cost of our measure is an interesting and important direction. 
As we have already hinted in Sec.~\ref{sec:spaceFolding}, one idea for future work is to explore how to optimize the choice of intermediate points on the path $\Gamma$ such that each intermediate point falls into a distinct linear region and the number of intermediate points $n$ equals the number of linear regions between the two samples.  A greedy approach would be to use the upper bound $N$ on the number of linear regions between two input points $\mathbf{x}_1, \mathbf{x}_2$ (see the elaboration in Sec.~\ref{sec:spaceFolding}) as follows: (\textit{i}) interpolate $N$ points linearly (step~1 in Alg.~\ref{alg:space_folding}), (\textit{ii}) compare their activation patterns: if there are duplicates go back to (\textit{i}) and sample more points, continue until all intermediate points have distinct activation patterns then break. A promising   approach seems to be that proposed by~\cite{gamba22arealllinearregions}, where the authors describe a method for discovery of linear regions between points $\mathbf{x}_1$ and $\mathbf{x}_1$ in the direction $\mathbf{d}=\mathbf{x}_2-\mathbf{x}_1$. We remark that more efficient approaches may  exist but we leave this exploration for future work. Furthermore, we will investigate reducing the computational cost of computing the measure $\chi$ by clustering samples within the classes $C_1$ and $C_2$.  For  preliminary experiments on the measure's sensitivity to that grouping see Appendix~\ref{sec:grouping_sensitivity}.

Another possible next step, would be to investigate the folding effects between an image from a selected class and its adversarial perturbation across various attacks, as well as for different types of data augmentation techniques. It might also be worth examining empirically the space folding effects on various setups such as networks trained with random labels, starting with no random labels, then progressing to $10\%$ random labels, $20\%$, and up to $100\%$. Some natural follow-up questions that might arise of these experiments are to understand: What are the differences, What about different learning rates and optimization techniques, and How does the measure change when using a fraction of hidden neurons compared to all hidden neurons.

Another important and interesting direction to take on, would be to extend this study to various neural networks types $\mathcal{N}$ (e.g., binary neural networks~\citep{Courbariaux2015binary_connect,rastegari2016xnor,Conti2018}), and transformer-like architectures with ReLUs~\citep{shen2023study,mirzadeh2023relu}. 

Lastly, in the context of reinforcement learning, it has been shown that linear regions evolve differently depending on the policy used~\citep{cohan2022understanding}. A natural extension of this work would be to investigate how our measure evolves under different policies.

\section{Conclusions}
\label{s:Conclusions}

We have proposed a novel method to measure deviations from  convexity between a walk on a straight line in the Euclidean input space and its mapping to the Hamming activation space.  Further, we have used  the introduced measure to analyze deviations from convexity under a synthetic example,   CantorNet,  and ReLU neural networks trained on MNIST. Our work  highlights the convexity transformation of the input space by a ReLU neural network. 

\section*{Acknowledgments}
SCCH and JKU's research was carried out under the Austrian COMET program (project S3AI with FFG no. 872172), which is funded by the Austrian ministries BMK, BMDW, and the province of Upper Austria. We thank the anonymous reviewers for their constructive comments, which  improved the quality of this work.

\bibliographystyle{apalike}
\bibliography{references}

\appendix

\section{CantorNet}
\label{app:cantornet}
CantorNet~\citep{lewandowski2024cantornet} is a synthetic example inspired by the triadic construction of the Cantor set~\citep{cantor1883}. It features two representations opposite in terms of their Kolmogorov complexities, one linear in the recursion depth $k$, and one exponential. It is defined through 
the function 
$A:[0,1]\to[0,1] : x\mapsto \max\{-3x+1,0, 3x-2\},$
as the {\it generating function} which is then nested as $A^{(k+1)}(x) := A(A^{(k)}(x)),\, A^{(1)}(x):= A(x)$.
Based on the generating function,  the decision manifold  $R_k$ is defined as:
\begin{equation}
\label{eq:Rk} 
R_k:= \{(x,y)\in [0,1]^2 : y \leq (A^{(k)}(x)+1)/2\}.
\end{equation}

The  decision surface of $R_k$ (Eq.~(\ref{eq:Rk})) equals to the 0-preimage of a 
ReLU net $\mathcal{N}_A^{(k)}: [0,1]^2 \rightarrow \mathbb{R}$  with weights and biases defined as
\begin{equation}
\label{eq:ReprA}
W_{1}=
\begin{pmatrix}
-3 & 0\\
3 & 0\\
0 & 1
\end{pmatrix},
b_{1}=\begin{pmatrix}
1\\
-2\\
0\\
\end{pmatrix},
W_{2}=
\begin{pmatrix}
    1 & 1 & 0\\ 
0 & 0 & 1
\end{pmatrix}
\end{equation}
and 
the final layer  
$W_{L}=
\begin{pmatrix}
    -\frac 12 & 1
\end{pmatrix},
b_{L}=
\begin{pmatrix}
    - \frac 12
\end{pmatrix}.
$
For recursion depth $k$, we define 
 $\mathcal{N}_A^{(k)}$ as
\begin{equation}
\label{eq:A}
 \mathcal{N}_A^{(k)}(\mathbf{x}):=W_L\circ\sigma \circ g^{(k)}(\mathbf{x}) + b_L,
\end{equation}
where 
$
g^{(k+1)}(\mathbf{x}) := g^{(1)}(g^{(k)}(\mathbf{x})), \sigma$ is the ReLU function, and 
\begin{equation}
\label{eq:g1}
g^{(1)}(\mathbf{x}):= \sigma \circ W_2\circ \sigma\circ (W_1 \mathbf{x}^T+b_1).
\end{equation}

\section{Heatmaps}
\label{sec:heatmaps}
 In this section, we present additional results for the median of maxima of non-zero space folding values and the corresponding Median Absolute Deviation (MAD), defined as:
\begin{equation}
\label{eq:mad}
    \text{MAD}:= \text{median} \left( \sum_\Gamma\left| \chi(\Gamma) - \text{median}(\chi) \right| \right)
\end{equation}
across different pairs of digits. 
We proceeded as follows:

\begin{algorithm}[H]
\caption{Computation of Aggregated Folding Measures for Digit Pairs}
\KwIn{Images of digits from classes $C_1$ and $C_2$}
\KwOut{Median of non-zero maxima of space folding values across digit pairs $\pm$ MAD (Eq.~(\ref{eq:mad}))}
\label{alg:folding_digitpairs}

\textbf{Step 1:} For each pair $(\mathbf{x}_{1i}, \mathbf{x}_{2j})$ where $\mathbf{x}_{1i} \in C_1$ and $\mathbf{x}_{2j} \in C_2$, compute  $\chi(\Gamma)$ using Alg.~\ref{alg:space_folding}.

\textbf{Step 2:} Among  $|C_1| \cdot |C_2|$ measures, select those with non-zero values, forming the set $\chi(\Gamma)_{+}$.

\textbf{Step 3:} For each $\chi(\Gamma)_{+}$, compute the maximum value along $\Gamma$. Collect   maximas into the set $\max\chi_+$.

\textbf{Step 4:} For each digit pair compute the median of $\max\chi_+\pm \text{MAD}$  (Eq.~(\ref{eq:mad})).
\end{algorithm}

Each heatmap corresponds to a network with a different depth. We observe an increase in the median space folding values with the depth of the neural network. In the main paper, we reported the Pearson correlation coefficient between the depth and the aggregated median space folding $\chi$ for ReLU networks trained on the random seed (equal 4) for which the networks achieved high validation accuracy, even for deeper layers. Initially, we worked with seeds $\{0,1,2,3,4\}$, but only for seeds $\{3,4\}$ did the network achieve the desired accuracy. For depths up to 5 layers, the validation accuracy easily exceeds $0.95$, while for 6, and 10 layers, it drops, with the deepest network achieving around $0.85$ of validation accuracy.

\begin{figure}
    \centering
    \includegraphics[width=0.45\linewidth]{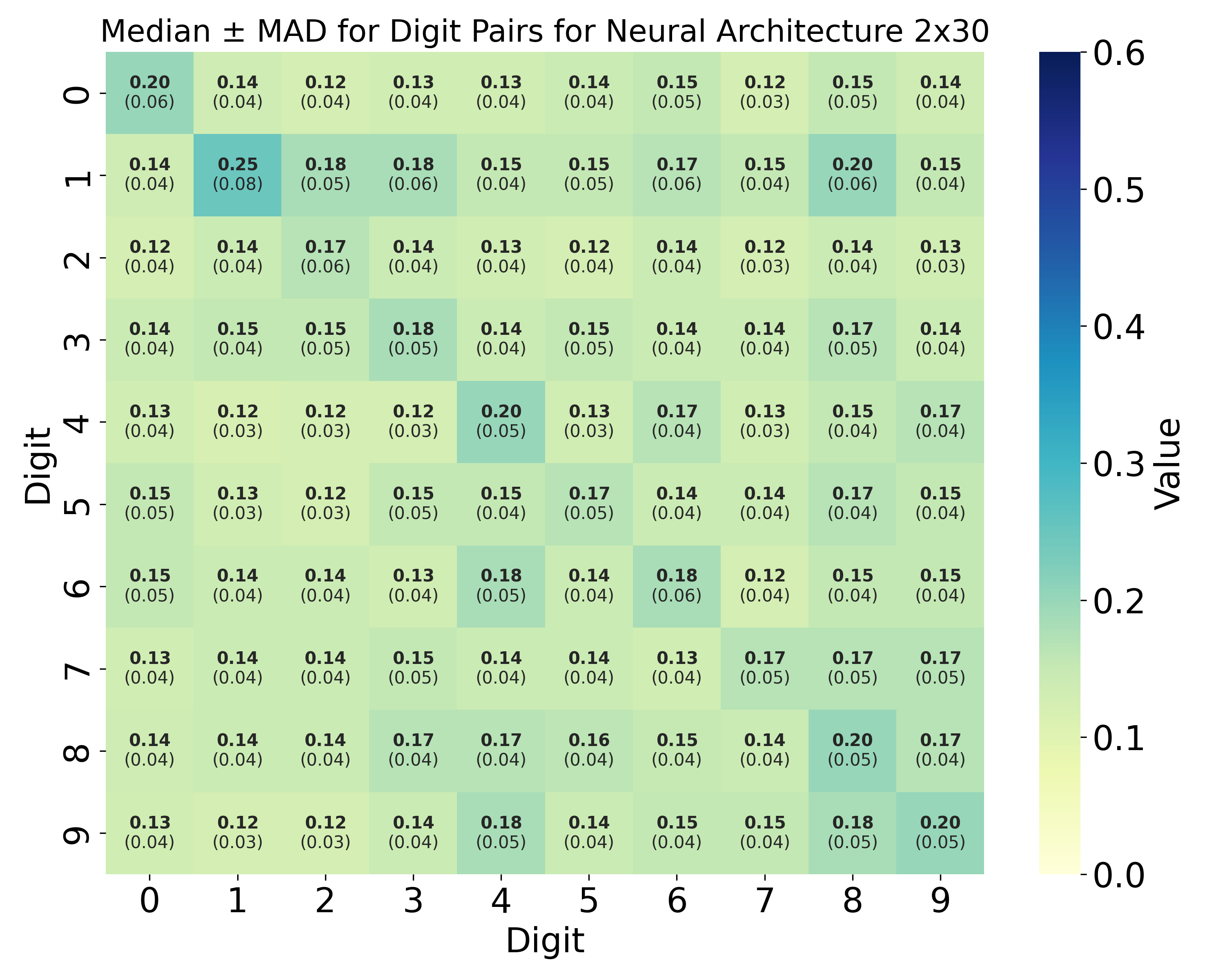}
    \includegraphics[width=0.45\linewidth]{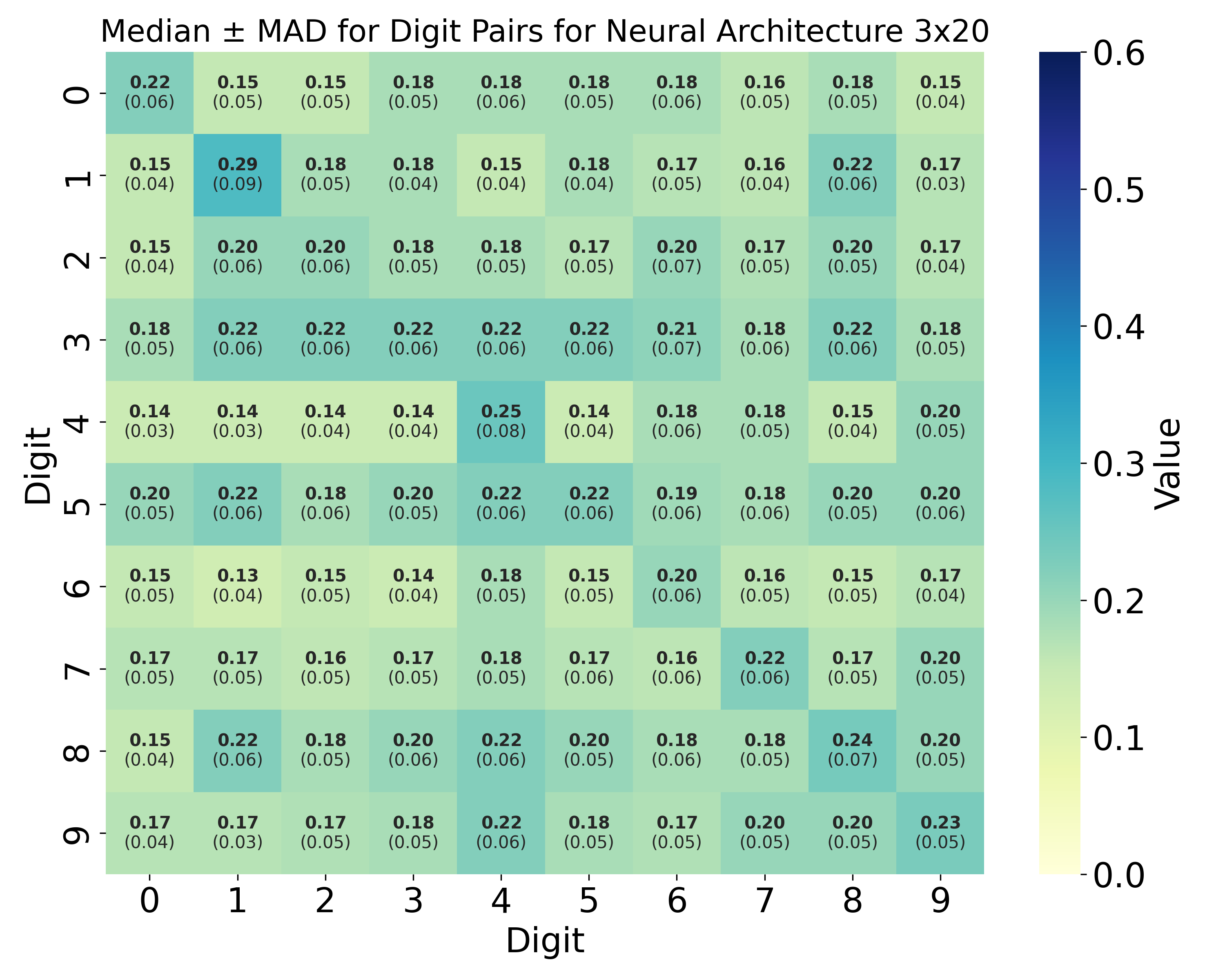}
    \includegraphics[width=0.45\linewidth]{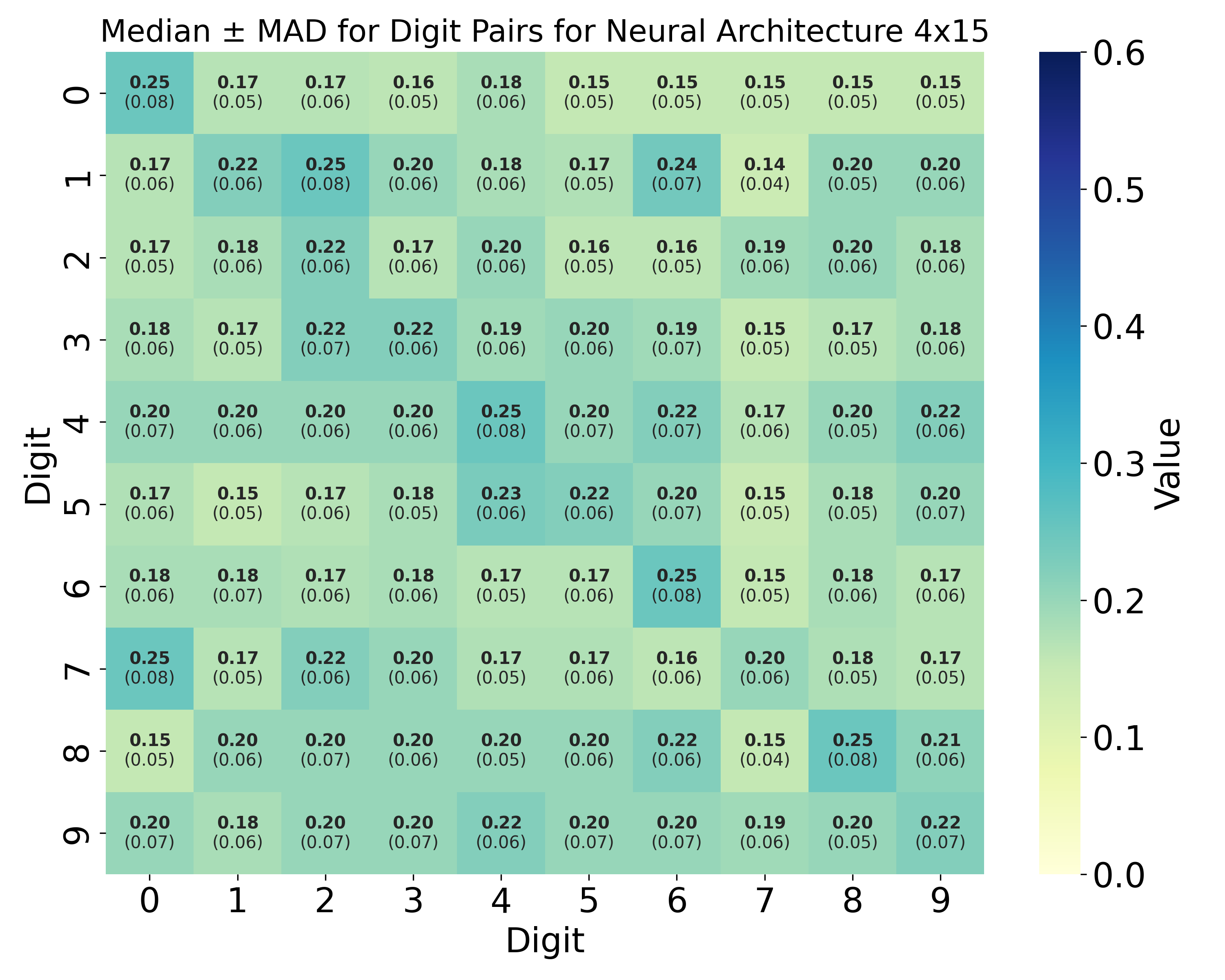}
    \includegraphics[width=0.45\linewidth]{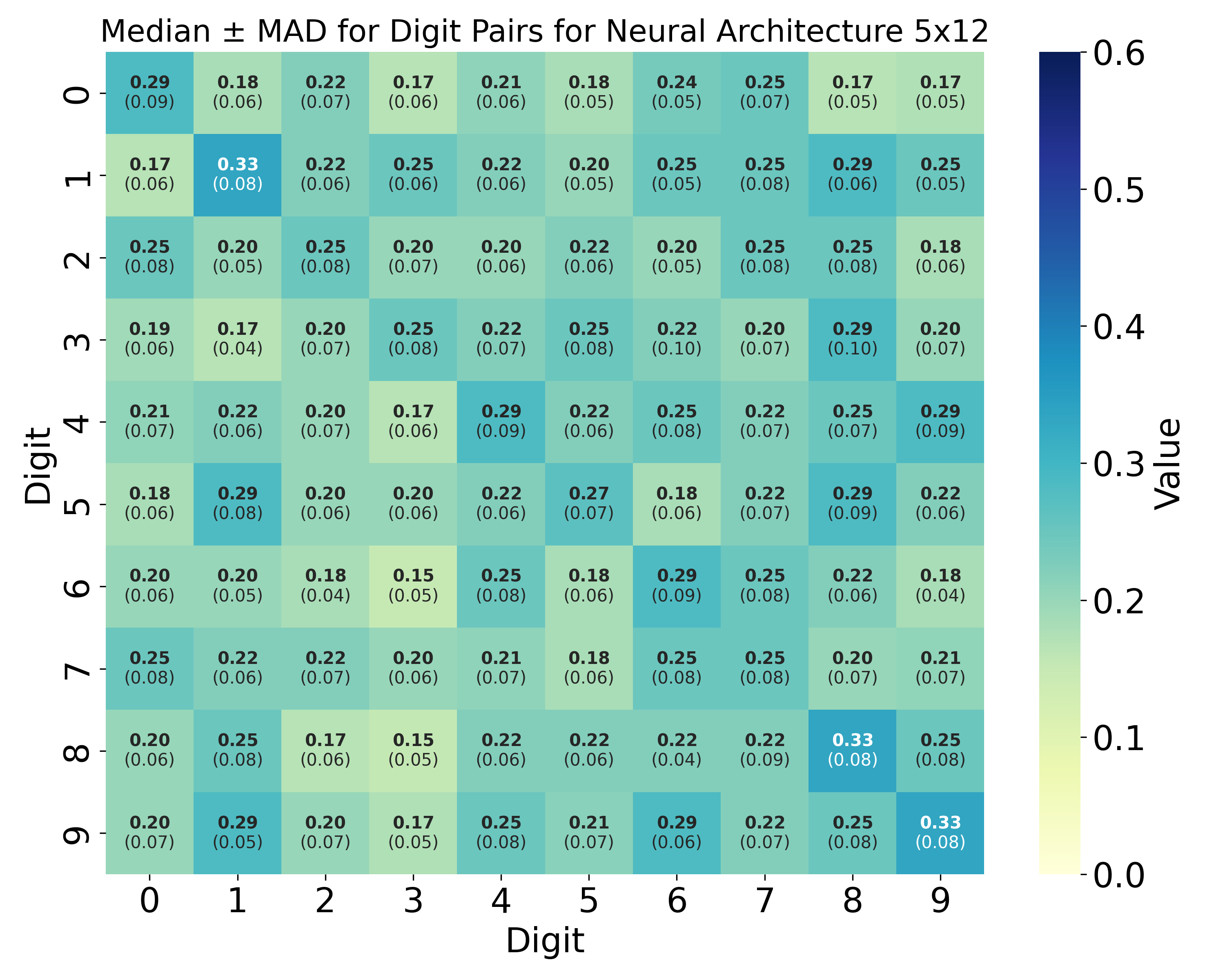}
    \includegraphics[width=0.45\linewidth]{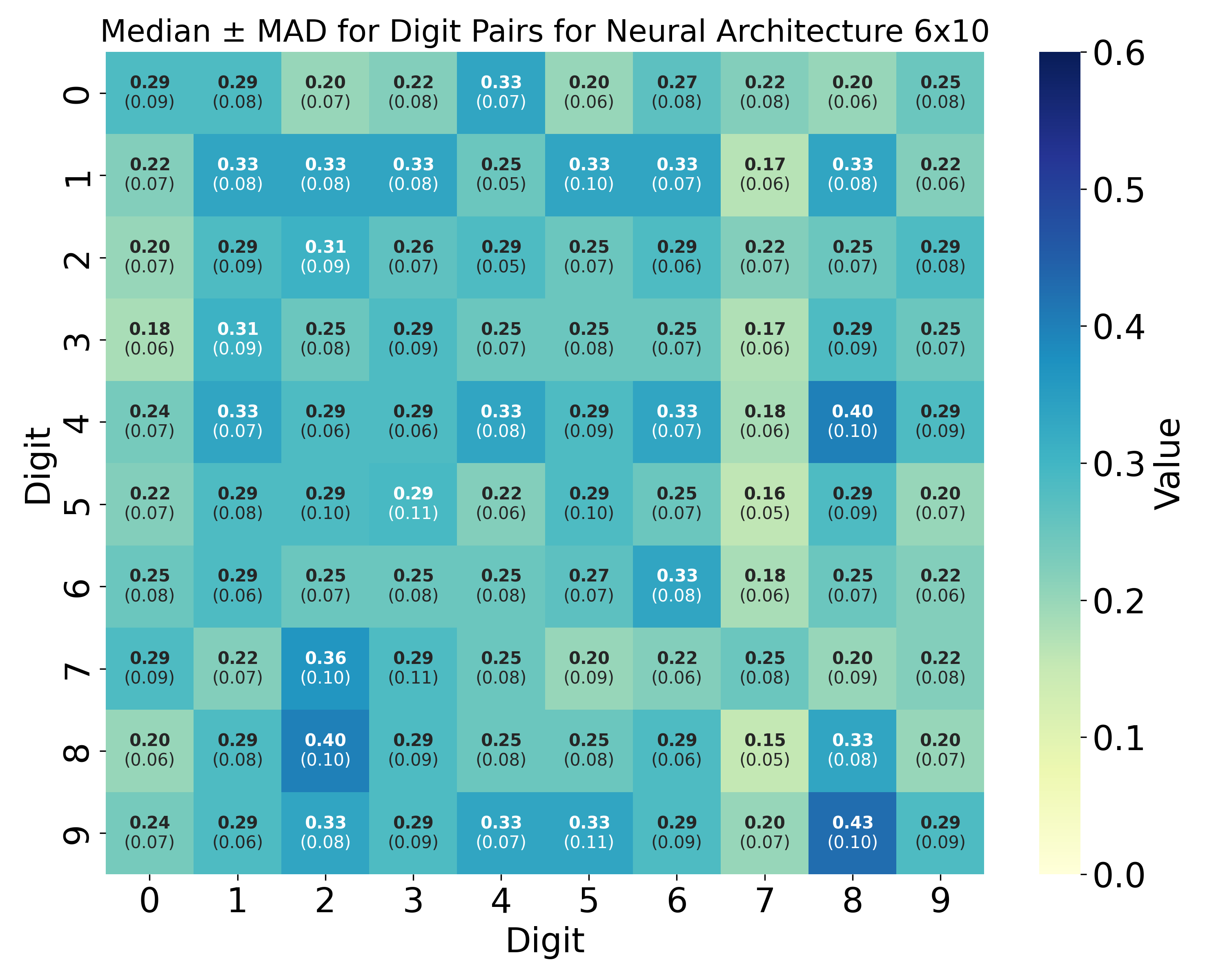}
    \includegraphics[width=0.45\linewidth]{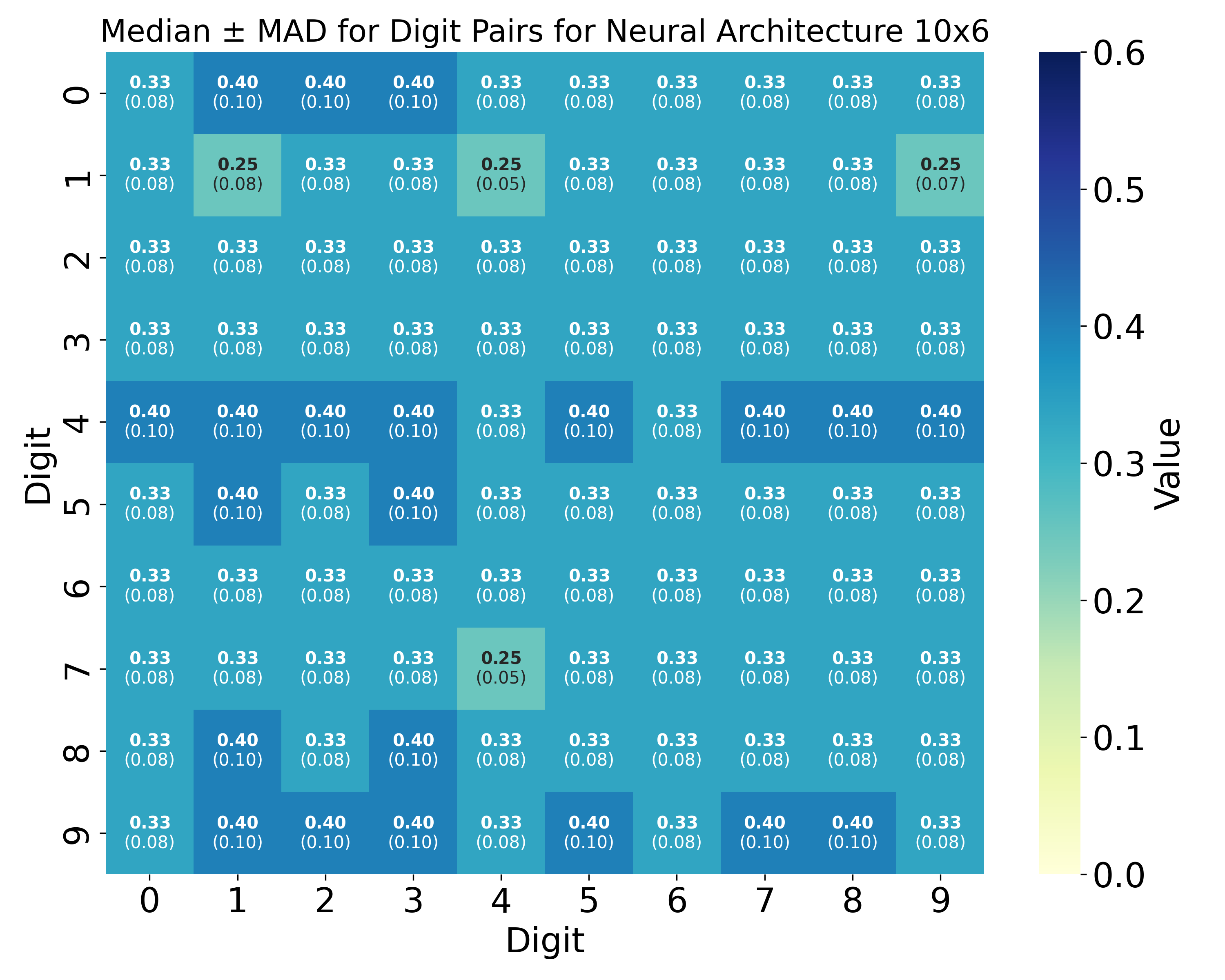}
    \caption{Median of maximas of non-zero space folding $\pm$ median absolute deviation for every digit pair in MNIST for  a selected ReLU neural network trained with random seed  4. The space folding values increase with depth (darker blue values).  (Best viewed in colors.) }
    \label{fig:range_measures}
\end{figure}

\section{Sensitivity to Grouping}
\label{sec:grouping_sensitivity}
In this section, we present a preliminary study on the sensitivity of our measure to clustered data as compared to working directly with the original samples. We study the effect of clustering (\textit{i}) within one class of digits, (\textit{ii}) within both classes of digits. Our analysis focuses on 16 pairs of digit classes from the MNIST test dataset  (we work with digits from classes $\{0,3,6,9\} \times \{0,3,6,9\}$).  We then vary the number of clusters ($k$) into which the digits are grouped, experimenting with $k \in \{1, 2, 5, 10, 20, 50, 100\}$. We use $k$-means in the (flattened) pixel space of the images. After clustering, the set of centroids is extracted and used as input to the procedure outlined in Algorithm~\ref{alg:folding_digitpairs}. This process yields a single folding value for each digit pair. We repeat this analysis for all 16 digit combinations, resulting in 16 values for each level of clustering (x-axis), for which we report the mean and  standard deviation values,  and add the  reference value of space folding for comparison purposes. We report the results for 2 different  neural networks, distinguished by colors (Fig.~\ref{fig:clustering_sensitivity}).

Qualitatively, our results indicate that the behavior of the measure is not strongly affected even with as little as 5 clusters per group.  Although these preliminary findings require validation on additional datasets, they suggest that computational complexity, as defined in Eq.~(\ref{eq:space_folding_complexity}), could be significantly reduced without compromising accuracy.

\begin{figure}[ht]
    \centering
    \includegraphics[width=0.4\linewidth]{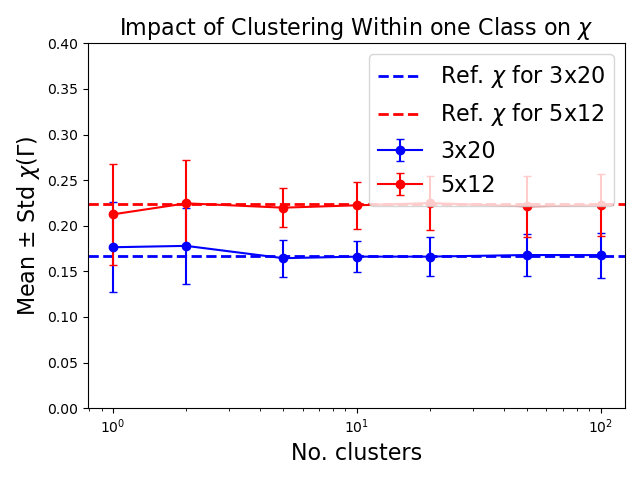}
    \includegraphics[width=0.4\linewidth]{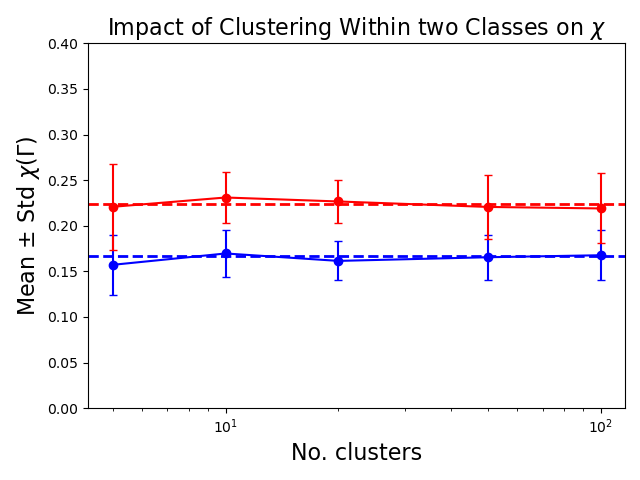}
    \caption{ Preliminary results illustrate the robustness of the space folding measure $\chi$ under clustering. \textbf{Left:} Only one digit class $C_1$ is clustered, while the other class $C_2$ remains intact. The measure remains stable across varying numbers of clusters $k$. \textbf{Right:} Both digit classes are clustered. Although no folding effects arise for $k=1,2$, introducing $k=5$ clusters per class already produces consistent folding effects, and these persist at higher clustering levels. Dashed lines indicate the reference values for the space folding measure computed using all the pairs of digits of respective classes.}
    \label{fig:clustering_sensitivity}
\end{figure}

\section{Analysis for Deeper Networks}
\label{sec:deeper_networks}

We performed a preliminary study on the folding effects for deeper networks  with architectures $\{2\times300,3\times200\}$ for the no. of layers $\times$ no. of neurons, respectively. As previously, we train those networks to convergence at MNIST, and store their parameters. We then investigate the space folding values similarly as before. Interestingly, we do not observe much difference in the space folding values as computed in Alg.~\ref{alg:folding_digitpairs} when compared to values obtained on shallower architectures (see Fig.~\ref{fig:heatmaps_deeper_networks}). Although the folding values themselves show little difference compared to shallower networks, the proportion of paths exhibiting folding effects increases substantially.  For the smaller architectures $2\times 30$ and $3\times 20$ the mean ratio ($\pm$ std) of paths with folding effects is $0.35\pm0.1$ and $0.44\pm0.15$, respectively, while for deeper architectures $2\times 30$ and $3\times 20$ those values are $0.97\pm0.04$ and $0.99\pm0.02$, respectively.

\begin{figure}
    \centering
    \includegraphics[width=0.4\linewidth]{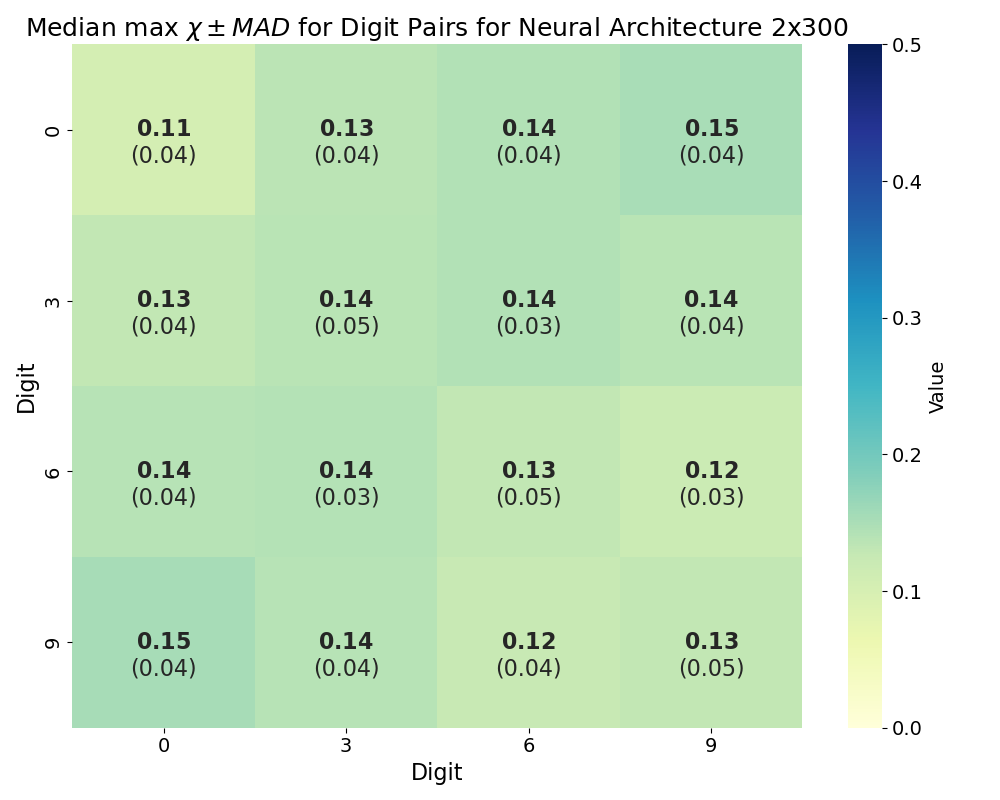}
    \includegraphics[width=0.4\linewidth]{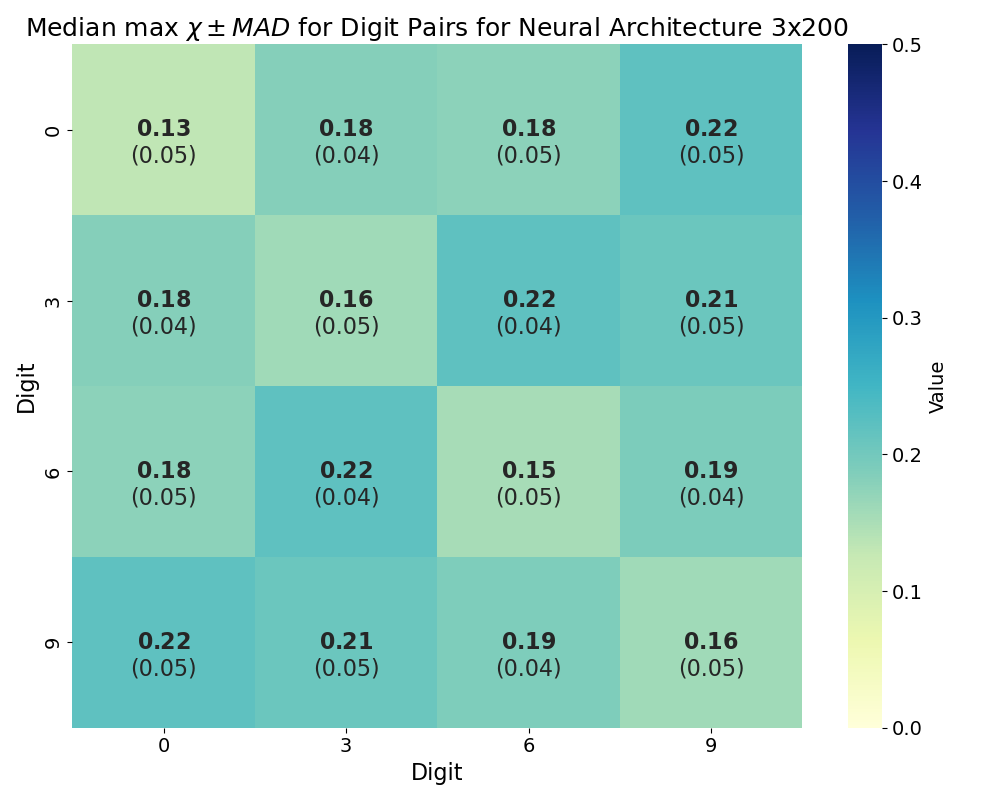}
    \caption{We investigate the folding effects for larger networks. Interestingly, we do not observe much difference for the folding values. However, we see a strong increase in the number of paths that feature folding effects.}
    \label{fig:heatmaps_deeper_networks}
\end{figure}

\end{document}